\theoremstyle{plain}
\newtheorem{theorem}{Theorem}[section]
\newtheorem{lemma}[theorem]{Lemma}
\theoremstyle{definition}
\theoremstyle{remark}
\newcommand{\rset}{\mathbb{R}}
\newcommand{\latentcont}{\mathsf{z}_e}
\newcommand{\latentcontpred}{\mathsf{h}_e}
\newcommand{\latentdis}{\mathsf{z}_q}
\newcommand{\rme}{\mathrm{e}}
\newcommand{\embedspace}{\mathcal{E}}
\newcommand{\embed}{\rme}
\newcommand{\bckw}{\tilde{q}}
\title{Diffusion bridges  vector quantized Variational AutoEncoders}
\date{}
\author[$\star$, $\ddag$]{Max Cohen}
\author[$\dag$]{Guillaume Quispe}
\author[$\star$]{Sylvain Le Corff}
\author[$\dag$]{Charles Ollion}
\author[$\dag$]{\'Eric Moulines}
\affil[$\star$]{{\small SAMOVAR, T\'el\'ecom SudParis, Institut Polytechnique de Paris, Palaiseau.}}
\affil[$\ddag$]{{\small Accenta, Boulogne-Billancourt.}}
\affil[$\dag$]{{\small CMAP, \'Ecole Polytechnique, Institut Polytechnique de Paris, Palaiseau.}}
\begin{document}

\maketitle

\begin{abstract}
Vector Quantized-Variational AutoEncoders (VQ-VAE) are generative models based on discrete latent representations of the data, where inputs are mapped to a finite set of learned embeddings.
To generate new samples, an autoregressive prior distribution over the discrete states must be trained separately. This prior is generally very complex and leads to slow generation. In this work, we propose a new model to train the prior and the encoder/decoder networks simultaneously. We build a diffusion bridge between a continuous coded vector and a non-informative prior distribution.  The latent discrete states are then given as random functions of these continuous vectors. We show that our model is competitive with the autoregressive prior on the mini-Imagenet and CIFAR dataset and is efficient in both optimization and sampling. Our framework also extends the standard VQ-VAE and enables end-to-end training.
\end{abstract}

\section{Introduction}
Variational AutoEncoders (VAE) have emerged as important generative models based on latent representations of the data.
While the latent states are usually continuous vectors, Vector Quantized Variational AutoEncoders (VQ-VAE) have demonstrated the usefulness of discrete latent spaces and have been successfully applied in image and speech generation \cite{oord2017neural, esser2021taming, ramesh2021zero}. 

In a VQ-VAE, the distribution of the inputs is assumed to depend on a hidden discrete state. Large scale image generation VQ-VAEs use for instance multiple discrete latent states, typically organized as 2-dimensional lattices. In the original VQ-VAE, the authors propose a variational approach to approximate the posterior distribution of the discrete states given the observations. The variational distribution takes as input the observation, which is passed through an encoder. The discrete latent variable is then computed by a nearest neighbour procedure that maps the encoded vector to the nearest discrete embedding.

 It has been argued that the success of VQ-VAEs lies in the fact that they do not suffer from the usual posterior collapse of VAEs  \cite{oord2017neural}. However, the implementation of VQ-VAE involves many practical tricks and still suffers from several limitations. First, the quantization step leads the authors to propose a rough approximation of the gradient of the loss function by copying gradients from the decoder input to the encoder output. Second, the prior distribution of the discrete variables is initially assumed to be uniform when training the VQ-VAE. In a second training step, high-dimensional autoregressive models such as PixelCNN \cite{oord2016conditional, salimans2017pixelcnn, chen2018pixelsnail} and WaveNet \cite{oord2016wavenet} are estimated to obtain a complex prior distribution. Joint training of the prior and the VQ-VAE is a challenging task for which no satisfactory solutions exist yet. Our work addresses both problems by introducing a new mathematical framework that extends and generalizes the standard VQ-VAE. Our method enables end-to-end training and, in particular, bypasses the separate training of an autoregressive prior.

An autoregressive pixelCNN prior model has several drawbacks, which are the same in the pixel space or in the latent space.  The data is assumed to have a fixed sequential order, which forces the generation to start at a certain point, typically in the upper left corner, and span the image or the 2-dimensional latent lattice in an arbitrary way. At each step, a new latent variable is sampled using the previously sampled pixels or latent variables. Inference may then accumulate prediction errors, while training provides ground truth at each step. 
 The runtime process, which depends mainly on the number of network evaluations, is sequential and depends on the size of the image or the 2-dimensional latent lattice, which can become very large for high-dimensional objects.

 The influence of the prior is further explored in \cite{razavi2019generating}, where VQ-VAE is used to sample images on a larger scale, using two layers of discrete latent variables, and  \cite{willetts:2021} use hierarchical discrete VAEs with numerous layers of latent variables. Other works such as \cite{esser2021taming, ramesh2021zero} have used Transformers to autoregressively model a sequence of latent variables: while these works benefit from the recent advances of Transformers for large language models, their autoregressive process still suffers from the same drawbacks as pixelCNN-like priors.

 The main claim of our paper is that using diffusions in a continuous space,  $\mathbb{R}^{d \times N}$ in our setting, is a very efficient way to learn complex discrete distributions, with support on a large space (here with cardinality $K^N$). We only require an embedded space, an uninformative target distribution (here a Gaussian law), and use a continuous bridge process to learn the discrete target distribution. In that direction, our contribution is inspired by the literature but also significantly different. Our procedure departs from the diffusion probabilistic model approach of \cite{ho2020denoising}, which highlights the role of bridge processes in denoising continuous target laws, and from \cite{hoogeboom2021argmax},  where multinomial diffusions are used to noise and denoise but prevent the use of the expressiveness of continuous bridges, and also do not scale well with $K$ as remarked by its authors. Although we target a discrete distribution, our approach does not suffer from this limitation.

 Our contributions are summarized as follows.
\begin{itemize}
    \item We propose a new mathematical framework for VQ-VAEs. We introduce a two-stage prior distribution. Following the diffusion probabilistic model approach of \cite{ho2020denoising}, we consider first a continuous latent vector parameterized as a Markov chain. The discrete latent states are defined  as random functions of this Markov chain. The  transition kernels of the continuous latent variables are trained using diffusion bridges to gradually produce samples that match the data.
    \item  To our best knowledge, this is the first probabilistic generative model to use denoising diffusion in discrete latent space. This framework allows for end-to-end training of VQ-VAE.
    \item We focus on VQ-VAE as our framework enables simultaneous training of all components of those popular discrete models which is not straightforward. However, our methodology is  more general and allows the use of continuous embeddings and diffusion bridges to sample form any discrete laws.
    \item We present our method on a toy dataset and then compare its efficiency to the pixelCNN prior of the original VQ-VAE on the miniImagenet dataset.
\end{itemize}
Figure~\ref{fig:archi} describes the complete architecture of our model.
\begin{figure}[h]
    \centering
    \includegraphics[width=.8\linewidth]{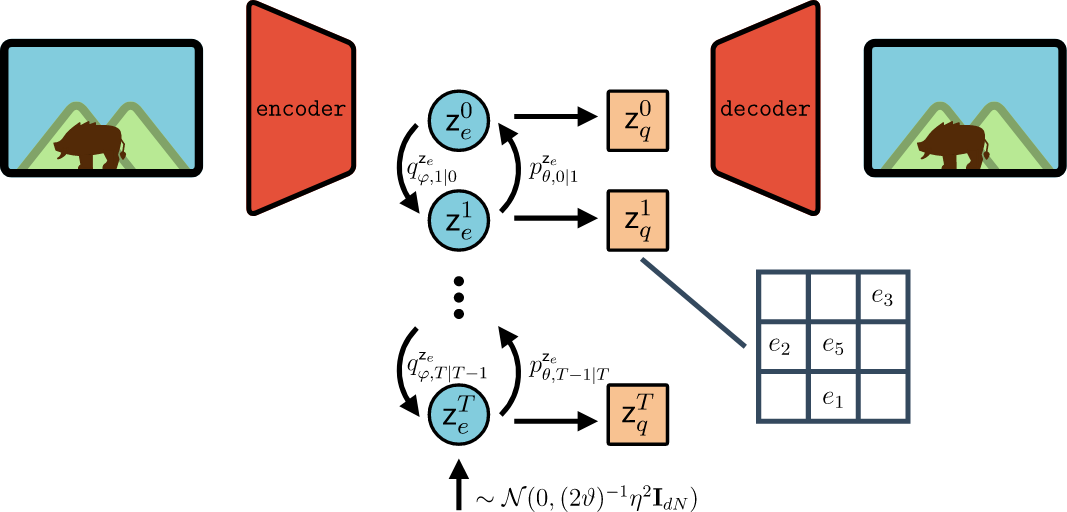}
    \caption{Our proposed architecture, for a prior based on a Ornstein-Uhlenbeck bridge. The top pathway from \textit{input image} to $\latentcont^0$, to $\latentdis^0$, to \textit{reconstructed image} resembles the original VQ-VAE model. The vertical pathway from $(\latentcont^0, \latentdis^0)$ to $(\latentcont^T, \latentdis^T)$ and backwards is based on a denoising diffusion process. See Section~\ref{sec:OU} and Algorithm~\ref{alg:sample} for the corresponding sampling procedure.}
    \label{fig:archi}
\end{figure}

\section{Related Works}

\paragraph{Diffusion Probabilistic Models.}
A promising class of models that depart from autoregressive models are Diffusion Probabilistic Models \cite{sohldickstein2015deep, ho2020denoising} and closely related Score-Matching Generative Models \cite{song2019generative, de2021simulating}. The general idea is to apply a corrupting Markovian process on the data through $T$ corrupting steps and learn a neural network that gradually \textit{denoises} or reconstructs the original samples from the noisy data.
For example, when sampling images, an initial sample is drawn from an uninformative distribution and reconstructed iteratively using the trained Markov kernel. This process is applied to all pixels simultaneously, so no fixed order is required and the sampling time does not depend on sequential predictions that depend on the number of pixels, but on the number of steps $T$. While this number of steps can be large ($T=1000$ is typical), simple improvements enable to reduce it dramatically and obtain $\times 50$ speedups \cite{song2021denoising}. These properties have led diffusion probability models to receive much attention in the context of continuous input modelling.

\paragraph{From Continuous to Discrete Generative denoising.}
In \cite{hoogeboom2021argmax}, the authors propose multinomial diffusion to gradually add categorical noise to discrete samples for which the generative denoising process is learned. Unlike alternatives such as normalizing flows, the diffusion proposed by the authors for discrete variables does not require gradient approximations because the parameter of the diffusion is fixed.

Such diffusion models are optimized using variational inference to learn the denoising process, i.e., the bridge that aims at inverting the multinomial diffusion. In \cite{hoogeboom2021argmax}, the authors propose a variational distribution based on bridge sampling.  
In \cite{austin2021structured}, the authors improve the idea by modifying the transition matrices of the corruption scheme with several tricks. The main one is the addition of absorbing states in the corruption scheme by replacing a discrete value with a MASK class, inspired by recent Masked Language Models like BERT. In this way, the corrupted dimensions can be distinguished from the original ones instead of being uniformly sampled. 
One drawback of their approach, mentioned by the authors, is that the transition matrix does not scale well for a large number of embedding vectors, which is typically the case in VQ-VAE.

Compared to discrete generative denoising, our approach takes advantage of the fact that the discrete distribution depends solely on a continuous distribution in VQ-VAE. We derive a novel model based on continuous-discrete diffusion that we believe is simpler and more scalable than the models mentioned in this section.

\paragraph{From Data to Latent Generative denoising.}
Instead of modelling the data directly, \cite{vahdat2021score} propose to perform score matching in a latent space. The authors propose a complete generative model and are able to train the encoder/decoder and score matching end-to-end. Their method also achieve excellent visual patterns and results but relies on a number of optimization heuristics necessary for stable training. In \cite{mittal2021symbolic}, the authors  have also applied such an idea in a generative music model. Instead of working in a continuous latent space, our method is specifically designed for a discrete latent space as in VQ-VAEs.

\paragraph{Using Generative denoising in discrete latent space. }
In the model proposed by \cite{gu2021vector}, the autoregressive prior is replaced by a discrete generative denoising process, which is perhaps closer to our idea. However, the authors focus more on a text-image synthesis task where the generative denoising model is traine based on an input text: it generates a set of discrete visual tokens given a sequence of text tokens. They also consider the VQ-VAE as a trained model and focus only on the generation of latent variables. This work focuses instead on deriving a full generative model with a sound probabilistic interpretation that allows it to be trained end-to-end.

\section{Diffusion bridges VQ-VAE}
 \subsection{Model and loss function}
 Assume that the distribution of the input $x\in\rset^m$ depends on a hidden discrete state $\latentdis\in\embedspace = \{\embed_1,\ldots,\embed_K\}$ with $\embed_k\in\rset^d$ for all $1\leqslant k \leqslant K$. Let $p_\theta$ be the joint probability density of $(\latentdis,x)$
 $$
 (\latentdis,x)\mapsto p_\theta(\latentdis,x) =  p_\theta(\latentdis) p_\theta(x|\latentdis)\,,
 $$
 where $\theta\in\rset^p$ are unknown parameters.
Consider first an encoding function $f_\varphi$ and write $\latentcont(x)= f_\varphi(x)$ the encoded data. In the original VQ-VAE, the authors proposed the following variational distribution to approximate $p_\theta(\latentdis|x)$:
 $$
  q_\varphi(\latentdis|x) = \delta_{\embed_{k^*_x}}(\latentdis)\,,
  $$
  where $\delta$ is the Dirac mass and
 $$
  k^*_x = \mathrm{argmin}_{1\leqslant k \leqslant K}\left\{\|\latentcont(x)-\embed_k\|_2\right\}\,,
 $$
  where $\varphi\in\rset^r$ are all the variational parameters.

In this paper, we introduce a diffusion-based generative VQ-VAE. This model allows to propose a VAE approach with an efficient joint training of the prior and the variational approximation. 
Assume that $\latentdis$ is a sequence, i.e.  $\latentdis= \latentdis^{0:T}$, where for all sequences $(a_u)_{u\geqslant 0}$ and all $0\leqslant s\leqslant t$, $a^{s:t}$ stands for $(a_s,\ldots,a_t)$. Consider the following joint probability distribution 
$$
p_{\theta}(\latentdis^{0:T},x) = p^{\latentdis}_{\theta}(\latentdis^{0:T})p^x_{\theta}(x|\latentdis^{0})\,.
$$ 
The latent discrete state $\latentdis^0$ used as input in the decoder  is the final state of the chain $(\latentdis^T,\ldots,\latentdis^0)$. We further assume that $p_{\theta}^{\latentdis}(\latentdis^{0:T})$ is the marginal distribution of  
$$
p_{\theta}(\latentdis^{0:T},\latentcont^{0:T}) 
= p^{\latentcont}_{\theta,T}(\latentcont^T) p^{\latentdis}_{\theta,T}(\latentdis^T|\latentcont^T)\prod_{t=0}^{T-1}p^{\latentcont}_{\theta,t|t+1}(\latentcont^t|\latentcont^{t+1})p^{\latentdis}_{\theta,t}(\latentdis^t|\latentcont^t)\,.
$$
In this setting, $\{\latentcont^t\}_{0\leqslant t\leqslant T}$ are continuous latent states in $\mathbb{R}^{d\times N}$ and conditionally on $\{\latentcont^t\}_{0\leqslant t\leqslant T}$ the $\{\latentdis^t\}_{0\leqslant t\leqslant T}$ are independent with discrete distribution with support $\embedspace^N$. This means that we model jointly $N$  latent states as this is useful for many applications such as image generation. 
The continuous latent state is assumed to be a Markov chain and at each time step $t$ the discrete variable $\latentdis^t$ is a random function of the corresponding $\latentcont^t$.  Although the continuous states are modeled as a Markov chain, the discrete variables arising therefrom have a more complex statistical structure (and in particular are not Markovian).

The prior distribution of  $\latentcont^T$ is assumed to be uninformative and this is the sequence of denoising transition densities $\{p^{\latentcont}_{\theta,t|t+1}\}_{0\leqslant t\leqslant T-1}$ which provides the final latent state $\latentcont^0$ which is mapped to the embedding space and used in the decoder, i.e. the conditional law of the data given the latent states. The final discrete $\latentdis^0$ only depends the continuous latent variable  $\latentcont^0$, similar to the dependency between $\latentdis$ and $\latentcont$ in the original VQ-VAE.

Since the conditional law $p_{\theta}(\latentdis^{0:T},\latentcont^{0:T}| x)$ is not available explicitly, this work focuses on  variational approaches to provide an approximation. Then, consider the following variational family:
$$
q_{\varphi}(\latentdis^{0:T},\latentcont^{0:T}| x) = \delta_{\latentcont(x)}(\latentcont^0)q_{\varphi,0}^{\latentdis}(\latentdis^0|\latentcont^0)\prod_{t=1}^T\left\{ q^{\latentcont}_{\varphi,t|t-1}(\latentcont^t|\latentcont^{t-1})q^{\latentdis}_{\varphi,t}(\latentdis^t|\latentcont^t)\right\}\,.
$$
The family $\{q^{\latentcont}_{\varphi,t|t-1}\}_{1\leqslant t \leqslant T}$  of forward "noising" transition densities are chosen to be the transition densities of a continuous-time process $(Z_t)_{t\geqslant 0}$ with $Z_0 = \latentcont(x)$. Sampling the diffusion bridge $(\tilde Z_t)_{t\geqslant 0}$, i.e. the law of the process $(Z_t)_{t\geqslant 0}$  conditioned on $Z_0 = \latentcont(x)$ and $Z_T = \latentcont^T$ is a challenging problem for general diffusions, see for instance \cite{beskos2008mcmc,lin2010generating,bladt2016simulation}. By the Markov property, the  marginal density at time $t$ of this conditioned process is given by:
\begin{equation}
\label{eq:markov:bridge}
\bckw^{\latentcont}_{\varphi,t|0,T}(\latentcont^t|\latentcont^0,\latentcont^T) = \frac{q^{\latentcont}_{\varphi,t|0}(\latentcont^t|\latentcont^{0})q^{\latentcont}_{\varphi,T|t}(\latentcont^T|\latentcont^{t})}{q^{\latentcont}_{\varphi,T|0}(\latentcont^T|\latentcont^{0})}\,.
\end{equation}
The Evidence Lower BOund (ELBO) is then defined, for all $(\theta,\varphi)$, as
$$
\mathcal{L}(\theta,\varphi) = \mathbb{E}_{q_{\varphi}}\left[\log \frac{p_{\theta}(\latentdis^{0:T},\latentcont^{0:T},x)}{q_{\varphi}(\latentdis^{0:T},\latentcont^{0:T}| x)}\right]\,,
$$
where $\mathbb{E}_{q_{\varphi}}$ is the expectation under $q_{\varphi}(\latentdis^{0:T},\latentcont^{0:T}| x)$.
\begin{lemma}
\label{lem:loss}
For all $(\theta,\varphi)$, the ELBO $\mathcal{L}(\theta,\varphi)$ is:
$$
\mathcal{L}(\theta,\varphi) = \mathbb{E}_{q_{\varphi}}\left[\log p^x_{\theta}(x|\latentdis^{0})\right] + \sum_{t=0}^T \mathcal{L}_t(\theta,\varphi)+ \sum_{t=0}^T\mathbb{E}_{q_{\varphi}}\left[\log \frac{p_{\theta,t}^{\latentdis}(\latentdis^{t}|\latentcont^{t})}{q_{\varphi,t}^{\latentdis}(\latentdis^{t}|\latentcont^{t})}\right]\,, 
$$
where, for $1\leqslant t \leqslant T-1$,
\begin{align*}
\mathcal{L}_0(\theta,\varphi) &=  \mathbb{E}_{q_\varphi}\left[\log p^{\latentcont}_{\theta, 0|1}(\latentcont^0|\latentcont^{1})\right]\,,\\
\mathcal{L}_t(\theta,\varphi) &= \mathbb{E}_{q_{\varphi}}\left[\log \frac{p_{\theta,t-1|t}^{\latentcont}(\latentcont^{t-1}|\latentcont^{t})}{q^{\latentcont}_{\varphi,t-1|0,t}(\latentcont^{t-1}|\latentcont^{0},\latentcont^{t})}\right]\,,\\
    \mathcal{L}_T(\theta,\varphi) &= \mathbb{E}_{q_{\varphi}}\left[\log \frac{p^{\latentcont}_{\theta,T}(\latentcont^{T})}{q_{\varphi,T|0}^{\latentcont}(\latentcont^{T}|\latentcont^{0})}\right]\,.
\end{align*}
\end{lemma}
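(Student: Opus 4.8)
The plan is to take the logarithm of the ratio in the definition of $\mathcal{L}(\theta,\varphi)$, substitute the two explicit factorizations of $p_{\theta}(\latentdis^{0:T},\latentcont^{0:T},x)$ and $q_{\varphi}(\latentdis^{0:T},\latentcont^{0:T}|x)$, and split the resulting sum of log-terms into three groups: the single observation factor $\log p^{x}_{\theta}(x|\latentdis^{0})$, the difference of the discrete conditionals $(\latentdis^{t}\mid\latentcont^{t})$, and the difference of the continuous chains $(\latentcont^{t})$. Linearity of $\mathbb{E}_{q_{\varphi}}$ then treats the three groups separately. Two of them need essentially no computation: the first is already the term $\mathbb{E}_{q_{\varphi}}[\log p^{x}_{\theta}(x|\latentdis^{0})]$ of the statement; and since $p_{\theta}$ contributes exactly one factor $p^{\latentdis}_{\theta,t}(\latentdis^{t}|\latentcont^{t})$ and $q_{\varphi}$ exactly one factor $q^{\latentdis}_{\varphi,t}(\latentdis^{t}|\latentcont^{t})$ for each $t\in\{0,\dots,T\}$, pairing them index by index and taking the expectation produces $\sum_{t=0}^{T}\mathbb{E}_{q_{\varphi}}[\log\{p^{\latentdis}_{\theta,t}(\latentdis^{t}|\latentcont^{t})/q^{\latentdis}_{\varphi,t}(\latentdis^{t}|\latentcont^{t})\}]$, the last sum in the statement.

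The substance lies in the continuous block, namely $\log p^{\latentcont}_{\theta,T}(\latentcont^{T})+\sum_{t=0}^{T-1}\log p^{\latentcont}_{\theta,t|t+1}(\latentcont^{t}|\latentcont^{t+1})-\sum_{t=1}^{T}\log q^{\latentcont}_{\varphi,t|t-1}(\latentcont^{t}|\latentcont^{t-1})$, and here I would run the standard denoising-diffusion rearrangement built on \eqref{eq:markov:bridge}. The Markov property of $(\latentcont^{t})$ under $q_{\varphi}$ together with Bayes' rule — the computation underlying \eqref{eq:markov:bridge} — gives, for every $t\geqslant 1$, $q^{\latentcont}_{\varphi,t|t-1}(\latentcont^{t}|\latentcont^{t-1})=\bckw^{\latentcont}_{\varphi,t-1|0,t}(\latentcont^{t-1}|\latentcont^{0},\latentcont^{t})\,q^{\latentcont}_{\varphi,t|0}(\latentcont^{t}|\latentcont^{0})\big/q^{\latentcont}_{\varphi,t-1|0}(\latentcont^{t-1}|\latentcont^{0})$. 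Substituting this for each factor in $\prod_{t=1}^{T}q^{\latentcont}_{\varphi,t|t-1}$, the ratios of the $(\,\cdot\mid\latentcont^{0})$-marginals telescope down to $q^{\latentcont}_{\varphi,T|0}(\latentcont^{T}|\latentcont^{0})$, while the bridge factor created at $t=1$ is degenerate — the conditional law of $\latentcont^{0}$ given $(\latentcont^{0},\latentcont^{1})$ — and drops out. After an index shift in $\prod_{t=0}^{T-1}p^{\latentcont}_{\theta,t|t+1}$ and pairing each surviving reverse transition of $p_{\theta}$ with the matching bridge density $\bckw^{\latentcont}$, one recovers the interior terms $\mathcal{L}_{t}$, together with two unpaired end contributions: $\log p^{\latentcont}_{\theta,0|1}(\latentcont^{0}|\latentcont^{1})$, whose expectation is $\mathcal{L}_{0}$, and $\log\{p^{\latentcont}_{\theta,T}(\latentcont^{T})/q^{\latentcont}_{\varphi,T|0}(\latentcont^{T}|\latentcont^{0})\}$, whose expectation is $\mathcal{L}_{T}$. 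Taking $\mathbb{E}_{q_{\varphi}}$ of the three blocks and adding gives the claimed identity.

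The one point that needs care — rather than a real difficulty — is the Dirac factor $\delta_{\latentcont(x)}(\latentcont^{0})$ appearing in $q_{\varphi}$. Since $\latentcont^{0}=\latentcont(x)$ holds $q_{\varphi}(\cdot|x)$-almost surely, this factor should be read as the density of $q_{\varphi}$ in the $\latentcont^{0}$-coordinate with respect to the reference measure $\delta_{\latentcont(x)}$, which is identically $1$ and hence contributes no log-term; equivalently, one conditions on $\latentcont^{0}=\latentcont(x)$ and argues on the remaining coordinates, exactly as the data point is held fixed in the ELBO derivation of \cite{ho2020denoising}. The only other thing to monitor is the boundary bookkeeping of the telescoping product at $t=0$ and $t=T$, where an off-by-one slip is easy; everything else is a direct expansion.
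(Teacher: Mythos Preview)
Your proposal is correct and follows essentially the same route as the paper's proof: split the log-ratio into the reconstruction term, the discrete $(\latentdis^{t}\mid\latentcont^{t})$ block, and the continuous block, then use the bridge identity \eqref{eq:markov:bridge} to rewrite the forward transitions $q^{\latentcont}_{\varphi,t|t-1}$ and telescope the $(\cdot\mid\latentcont^{0})$-marginals down to $q^{\latentcont}_{\varphi,T|0}$. The only cosmetic difference is at the lower boundary: the paper separates out the $t=1$ factor \emph{before} applying the bridge identity (so the telescoping runs over $t=2,\dots,T$ and the leftover $q^{\latentcont}_{\varphi,1|0}$ cancels the separated denominator), whereas you apply the identity at $t=1$ as well and then discard the degenerate bridge $\bckw^{\latentcont}_{\varphi,0|0,1}$ together with $q^{\latentcont}_{\varphi,0|0}$; both reach $\mathcal{L}_{0}=\mathbb{E}_{q_{\varphi}}[\log p^{\latentcont}_{\theta,0|1}(\latentcont^{0}|\latentcont^{1})]$, and your explicit remark on the Dirac factor $\delta_{\latentcont(x)}(\latentcont^{0})$ is a point the paper leaves implicit.
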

\begin{proof}
The proof is standard and postponed to Appendix~\ref{ap:loss}.
\end{proof}
The three terms of the objective function can be interpreted as follows:
$$
\mathcal{L}(\theta,\varphi) = \mathcal{L}^{rec}(\theta,\varphi) + \sum_{t=0}^T \mathcal{L}_t(\theta,\varphi) + \sum_{t=0}^T \mathcal{L}^{reg}_t(\theta,\varphi)
$$
with $\mathcal{L}^{rec} = \mathbb{E}_{q_{\varphi}}[\log p^x_{\theta}(x|\latentdis^{0})]$ a reconstruction term, $\mathcal{L}_t$ the diffusion term, and an extra term \begin{equation}
\mathcal{L}^{reg}_t = \mathbb{E}_{q_{\varphi}}\left[\log \frac{p_{\theta,t}^{\latentdis}(\latentdis^{t}|\latentcont^{t})}{q_{\varphi,t}^{\latentdis}(\latentdis^{t}|\latentcont^{t})}\right]\,,
\end{equation}
which may be seen as a regularization term as discussed in next sections.
\subsection{Application to Ornstein-Uhlenbeck processes}
\label{sec:OU}
Consider for instance the following Stochastic Differential Equation (SDE) to add noise to the  normalized inputs:
\begin{equation}
\label{eq:ou}
\mathrm{d}Z_t = -\vartheta (Z_t - z_*)\mathrm{d}t + \eta\mathrm{d}W_t\,,
\end{equation}
where $\vartheta, \eta>0$,  $z_*\in\mathbb{R}^{d\times N}$ is the target state at the end of the noising process and $\{W_t\}_{0\leqslant t\leqslant T}$ is a standard Brownian motion in $\mathbb{R}^{d\times N}$. We can define the variational density by integrating this SDE along small step-sizes. Let $\delta_t$ be the time step between the two consecutive latent variables $\latentcont^{t-1}$ and $\latentcont^{t}$. In this setting, $q^{\latentcont}_{\varphi,t|t-1}(\latentcont^t|\latentcont^{t-1})$ is a Gaussian probability density function with mean $z_* + (\latentcont^{t-1}-z_*)\mathrm{e}^{-\vartheta \delta_t}$ in $\mathbb{R}^{d\times N}$ and covariance matrix $(2\vartheta)^{-1}\eta^2(1-\mathrm{e}^{-2\vartheta\delta_t})\mathbf{I}_{dN}$, where for all $n\geqslant 1$, $\mathbf{I}_{n}$ is the identity matrix with size $n\times n$. Asymptotically the process is a Gaussian with mean $z_*$ and variance $\eta^2(2\vartheta)^{-1} \mathbf{I}_{dN}$.

The denoising process amounts then to sampling from the bridge associated with the SDE, i.e. sampling $\latentcont^{t-1}$ given $\latentcont^0$ and $\latentcont^t$. The law of this bridge is explicit for the Ornstein-Uhlenbeck diffusion \eqref{eq:ou}.
Using \eqref{eq:markov:bridge},
$$
\bckw^{\latentcont}_{\varphi,s|0,t}(\latentcont^{s}|\latentcont^{t},\latentcont^{0}) \propto q^{\latentcont}_{\varphi,s|0}(\latentcont^{t-1}|\latentcont^{0}) q^{\latentcont}_{\varphi,t|s}(\latentcont^{t}|\latentcont^{s})\,,
$$
where $0\leqslant s\leqslant t$, so that $\bckw^{\latentcont}_{\varphi,t-1|0,t}(\latentcont^{t-1}|\latentcont^{t},\latentcont^{0})$ is a Gaussian probability density function with mean
$$
 \tilde \mu_{\varphi,t-1|0,t}(\latentcont^0,\latentcont^t) = \frac{\beta_t}{1-\bar{\alpha}_t}\left(z_* + \sqrt{\bar{\alpha}_{t-1}}(\latentcont^0-z_*)\right) + \frac{1-\bar{\alpha}_{t-1}}{1-\bar{\alpha}_t}\sqrt{\alpha}_t\left(\latentcont^t - (1 - \sqrt{\alpha_t} )z_*\right)
$$
 and covariance matrix
 $$
 \tilde \sigma^2_{\varphi,t-1|0,t} = \frac{\eta^2}{2\vartheta}\frac{1- \bar \alpha_{t-1}}{1- \bar \alpha_{t}}\beta_t\, \mathbf{I}_{dN}\,,
 $$
 where $\beta_t = 1 - \mathrm{exp}(-2\vartheta \delta_t)$, $\alpha_t = 1-\beta_t$ and $\bar{\alpha}_t = \prod_{s=1}^{t}\alpha_s$.
 Note that the bridge sampler proposed in \cite{ho2020denoising} is a specific case of this setting with $\eta = \sqrt{2}$, $z_*=0$ and $\vartheta = 1$. 

\paragraph{Choice of denoising model $p_\theta$.}
Following \cite{ho2020denoising}, we propose a Gaussian distribution for $p_{\theta,t-1|t}^{\latentcont}(\latentcont^{t-1}|\latentcont^{t})$ with mean $\mu_{\theta,t-1|t}(\latentcont^{t},t)$ and variance $\sigma_{\theta,t|t-1}^2\,\mathbf{I}_{dN}$. In the following, we choose $$
\sigma_{\theta,t|t-1}^2 = \frac{\eta^2}{2\vartheta}\frac{1- \bar \alpha_{t-1}}{1- \bar \alpha_{t}}\beta_t\,
$$
so that the term $\mathcal{L}_t$ of Lemma~\ref{lem:loss} writes
$$
    2\sigma_{\theta,t|t-1}^2\mathcal{L}_t(\theta,\varphi)   = -\mathbb{E}_{q_\varphi}\left[\left\|\mu_{\theta,t-1|t}(\latentcont^{t},t) -  \tilde \mu_{\varphi,t-1|0,t}(\latentcont^0,\latentcont^t)\right\|_2^2\right]\,.
$$
In addition, under $q_\varphi$, $\latentcont^t$ has the same distribution as
$$
\latentcontpred^t(\latentcont^{0},\varepsilon_t) = z_* + \sqrt{\bar \alpha_t}(\latentcont^{0}-z_*) + \sqrt{\frac{\eta^2}{2\vartheta}(1-\bar\alpha_t)}\varepsilon_t\,,
$$
where $\varepsilon_t \sim \mathcal{N}(0,\mathbf{I}_{dN})$. Then, for instance in the case $z_*=0$, $\tilde \mu_{\varphi,t-1|0,t}$ can be reparameterised as follows:
$$
    \tilde \mu_{\varphi,t-1|0,t}(\latentcont^0,\latentcont^t) =  \frac{1}{\sqrt{\alpha_t}}\left(\latentcontpred^t(\latentcont^0,\varepsilon_t) - \sqrt{\frac{\eta^2}{2\vartheta (1-\bar{\alpha}_t)}}\beta_t\varepsilon_t\right)\,.
$$
We therefore propose to use
$$
    \mu_{\theta,t-1|t}(\latentcont^{t},t) =
     \frac{1}{\sqrt{\alpha_t}}\left(\latentcont^t - \sqrt{\frac{\eta^2}{2\vartheta (1-\bar{\alpha}_t)}}\beta_t\varepsilon_\theta(\latentcont^{t},t)\right)\,,
$$
which yields
\begin{equation}
    \label{eq:hoparam}
    \mathcal{L}_t(\theta,\varphi) \\
    = \frac{-\beta_t}{2\alpha_t(1-\bar\alpha_{t-1})}\mathbb{E}\left[\left\|\varepsilon_t - \varepsilon_\theta(\latentcontpred^t(\latentcont^{0},\varepsilon_t) ,t) \right\|_2^2\right]\,.
\end{equation}
Several choices can be proposed to model the function $\varepsilon_\theta$. The deep learning architectures considered in the numerical experiments are discussed in Appendix~\ref{ap:networks} and ~\ref{ap:additionaltoy}. Similarly to \cite{ho2020denoising}, we use a stochastic version of our loss function:  sample $t$ uniformly in $\{0, \ldots,  T\}$, and consider $\mathcal{L}_t(\theta,\varphi)$ instead of the full sum over all $t$. The final training algorithm is described in Algorithm~\ref{alg:train} and the sampling procedure in Algorithm~\ref{alg:sample}. 

\paragraph{Connections with the VQ-VAE loss function. } In the special case where $T=0$, our loss function can be reduced to a standard VQ-VAE loss function. In that case, write $\latentdis = \latentdis^0$ and $\latentcont = \latentcont^0$, the ELBO then becomes: 
$$
\mathcal{L}(\theta,\varphi) = \mathbb{E}_{q_{\varphi}}\left[\log p^x_{\theta}(x|\latentdis)\right]
+ \mathbb{E}_{q_{\varphi}}\left[\log \frac{p_{\theta}^{\latentdis}(\latentdis|\latentcont)}{q_{\varphi}^{\latentdis}(\latentdis|\latentcont)}\right]\,, 
$$
Then, if we assume that $p_{\theta}^{\latentdis}(\latentdis|\latentcont) = \mathrm{Softmax}\{-\|\latentcont - \embed_k\|^2_2\}_{1\leq k \leq K}$ and that  $q_{\varphi}^{\latentdis}(\latentdis|\latentcont)$ is as in \cite{oord2017neural}, i.e. a Dirac mass at $\widehat{\latentdis} = \mathrm{argmin}_{1\leq k \leq K}\|\latentcont - \embed_k\|^2_2$, up to an additive constant, this yields the following random estimation of $\mathbb{E}_{q_{\varphi}}[\log p_{\theta}^{\latentdis}(\latentdis|\latentcont)/q_{\varphi}^{\latentdis}(\latentdis|\latentcont)]$,
$$
    \widehat{\mathcal{L}}^{reg}_{\latentdis}(\theta,\varphi) = \|\latentcont - \widehat{\latentdis}\|_2 + \log \left(\sum_{k=1}^{K}\exp\left\{-\|\latentcont-\embed_k\|_2\right\}\right)\,.
$$
The first term of this loss is the loss proposed in \cite{oord2017neural} which is then split into two parts using the stop gradient operator. The last term is simply the additional normalizing term of  $p_{\theta}^{\latentdis}(\latentdis|\latentcont)$. 

\paragraph{Connecting diffusion and discretisation. } Similar to the VQ-VAE case above, it is possible to consider only the term $\mathcal{L}^{reg}_0(\theta,\varphi)$ in the case $T > 0$. However, our framework allows for much flexible parameterisation of $p_{\theta,t}^{\latentdis}(\latentdis^t|\latentcont^t)$ and $q_{\varphi,t}^{\latentdis}(\latentdis^t|\latentcont^t)$. For instance, the Gumbel-Softmax trick provides an efficient and differentiable parameterisation. A sample $\latentdis^t\sim  p_{\theta,t}^{\latentdis}(\latentdis^t|\latentcont^t)$ (resp. $\latentdis^t\sim q_{\varphi,t}^{\latentdis}(\latentdis^t|\latentcont^t)$) can be obtained by sampling with probabilities proportional to $\{\exp\{(-\|\latentcont - \embed_k\|^2_2 + G_k )/\tau_t\}\}_{1\leq k \leq K}$ (resp. $\{\exp\{(-\|\latentcont - \embed_k\|^2_2 + \tilde G_k )/\tau\}\}_{1\leq k \leq K}$), where $\{(G_k,\tilde G_k)\}_{1\leq k \leq K}$ are i.i.d. with distribution $\mathrm{Gumbel}(0,1)$, $\tau>0$, and $\{\tau_t\}_{0\leq t \leq T}$ are positive  time-dependent scaling parameters. In practice, the third part of the objective function can be computed efficiently, by using a stochastic version of the ELBO, computing a single $\mathcal{L}^{reg}_t(\theta,\varphi)$ instead of the sum (we use the same $t$ for both parts of the ELBO). The term reduces to:

\begin{equation}
    \mathcal{L}^{reg}_t(\theta,\varphi) = -\mathrm{KL}(q_\varphi(\latentdis^{t}|\latentcont^{t})\|p_\theta(\latentdis^{t}|\latentcont^{t}))\,. 
\end{equation}
This terms connects the diffusion and quantisation parts as it creates a gradient pathway through a step $t$ of the diffusion process, acting as a regularisation on the codebooks and $\latentcont^t$. Intuitively, maximizing $\mathcal{L}^{reg}_t(\theta,\varphi)$  accounts for pushing codebooks and $\latentcont^t$ together or apart depending on the choice of $\tau, \tau_t$. The final end-to-end training algorithm is described in Algorithm~\ref{alg:train}, and further considerations are provided  in Appendix~\ref{ap:reg}.

\begin{algorithm}[tb]
   \caption{Training procedure}
   \label{alg:train}
\begin{algorithmic}
   \REPEAT

   \STATE Compute $\latentcont^0= f_\varphi(x)$ 
   \STATE Sample $\hat{\latentdis}^0 \sim q_\varphi(\latentdis^0|\latentcont^0)$
   \STATE Compute $\textcolor{teal}{\hat{\mathcal{L}}^{rec}(\theta,\varphi)} = \log p^x_{\theta}(x|\hat{\latentdis}^0)$ 

   \STATE Sample $t \sim Uniform(\{0,\ldots, T\})$ 
   \STATE Sample $\varepsilon_t \sim \mathcal{N}(0,\mathbf{I}_{dN})$
   \STATE Sample $\latentcont^{t} \sim q_{\varphi,t}(\latentcont^t|\latentcont^0)$ (using $\varepsilon_t$)
   \STATE Compute $\textcolor{olive}{\hat{\mathcal{L}}_t(\theta,\varphi)}$ from $\varepsilon_\theta(\latentcont^{t},t)$ and $\varepsilon_t$ using \eqref{eq:hoparam}
   \STATE Compute $\textcolor{violet}{\hat{\mathcal{L}}^{reg}_t(\theta,\varphi)}$ from $\latentcont^{t}$ \textit{(see text)}

   \STATE $\hat{\mathcal{L}}(\theta,\varphi) = \textcolor{teal}{\hat{\mathcal{L}}^{rec}(\theta,\varphi)} + \textcolor{olive}{\hat{\mathcal{L}}_t(\theta,\varphi)} + \textcolor{violet}{\hat{\mathcal{L}}^{reg}_t(\theta,\varphi)}$
   \STATE Perform SGD step on $-\hat{\mathcal{L}}(\theta,\varphi)$
   \UNTIL{convergence}
\end{algorithmic}
\end{algorithm}

\begin{algorithm}[tb]
   \caption{Sampling procedure (for $z_* = 0$)}
   \label{alg:sample}
\begin{algorithmic}
   
   \STATE Sample $\latentcont^T \sim \mathcal{N}(0, (2\vartheta)^{-1}\eta^2 \mathbf{I}_{dN})$ 
   \FOR{$t=T$ {\bfseries to} $1$}
   \STATE Set $\latentcont^{t-1} = \alpha_t^{-1/2}\left(\latentcont^t - \sqrt{\frac{\eta^2}{2\vartheta (1-\bar{\alpha}_t)}}\beta_t\varepsilon_\theta(\latentcont^{t},t)\right)$
   \ENDFOR
   \STATE  Sample $\latentdis^0 \sim  p^{\latentdis}_{\theta,0}(\latentdis^0|\latentcont^0)$ \COMMENT{\textit{quantisation}}
   \STATE Sample $x ~ \sim  p^x_{\theta}(x|\latentdis^0)$ \COMMENT{\textit{decoder}}
\end{algorithmic}
\end{algorithm}

\section{Experiments}

\subsection{Toy Experiment}
In order to understand the proposed denoising procedure for VQ-VAE, consider a simple toy setting in which there is no encoder nor decoder, and the codebooks $\{\embed_j\}_{0\leqslant j \leqslant K-1}$ are fixed. In this case, with $d=2$ and $N=5$, $x = \latentcont^0 \in \rset^{2\times 5}$. We choose $K=8$ and the codebooks $\embed_j = \mu_j \in \rset^2$, $0\leqslant j \leqslant K-1$, are fixed centers at regular angular intervals in $\rset^2$ and shown in Figure~\ref{fig:toydata}; the latent states $(\latentdis^t)_{1\leq t\leq T}$ lie in $\{\embed_0,\ldots,\embed_7\}^5$. Data generation proceeds as follows. First, sample a sequence of $(q_1,\ldots,q_5)$ in $\{0,\dots,7\}$: $q_1$ has a uniform distribution, and, for $s\in\{0,1,2,3\}$, $q_{s+1} = q_s + b_s \mod 8$, where $b_s$ are independent Bernoulli samples with parameter $1/2$ taking values in $\{-1, 1\}$. Conditionally on $(q_1,\ldots,q_5)$, $x$ is a Gaussian random vector with mean $(\embed_{q_1},\ldots,\embed_{q_5})$ and variance $\mathbf{I}_{2\times 5}$.

\begin{figure}[h!]
    \centering
    \includegraphics[scale=0.55]{./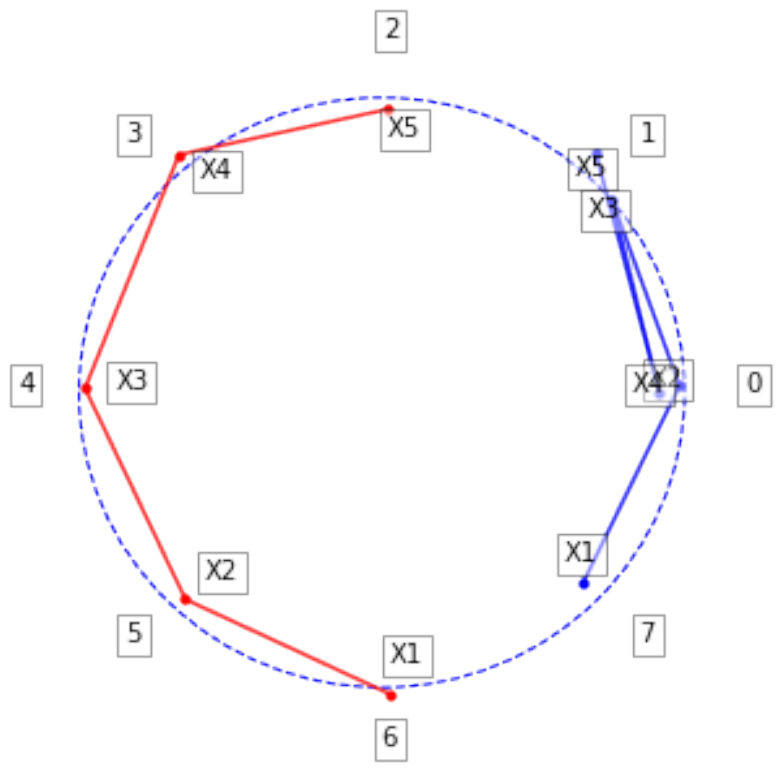}
    \caption{Toy dataset, with $K=8$ centroids, and two samples $x = (x_1,x_2,x_3,x_4,x_5)$ in $\rset^{2 \times 5}$ each displayed as $5$ points in $\rset^{2}$ (blue and red points), corresponding to the discrete sequences (red) $(6,5,4,3,2)$ and (blue) $(7,0,1,0,1)$.}
    \label{fig:toydata}
\end{figure}

We train our bridge procedure with $T=50$ timesteps, $\vartheta=2, \eta=0.1$, other architecture details and the neural network $\varepsilon_\theta(\latentcont^t ,t)$ are described in Appendix~\ref{ap:additionaltoy}. Forward noise process and denoising using $\varepsilon_\theta(\latentcont^t ,t)$ are showcased in Figure~\ref{fig:noisedenoise}, and more illustrations and experiments can be found in Appendix~\ref{ap:additionaltoy}. 

\begin{figure}[h!]
    \centering
    \includegraphics[scale=0.55]{./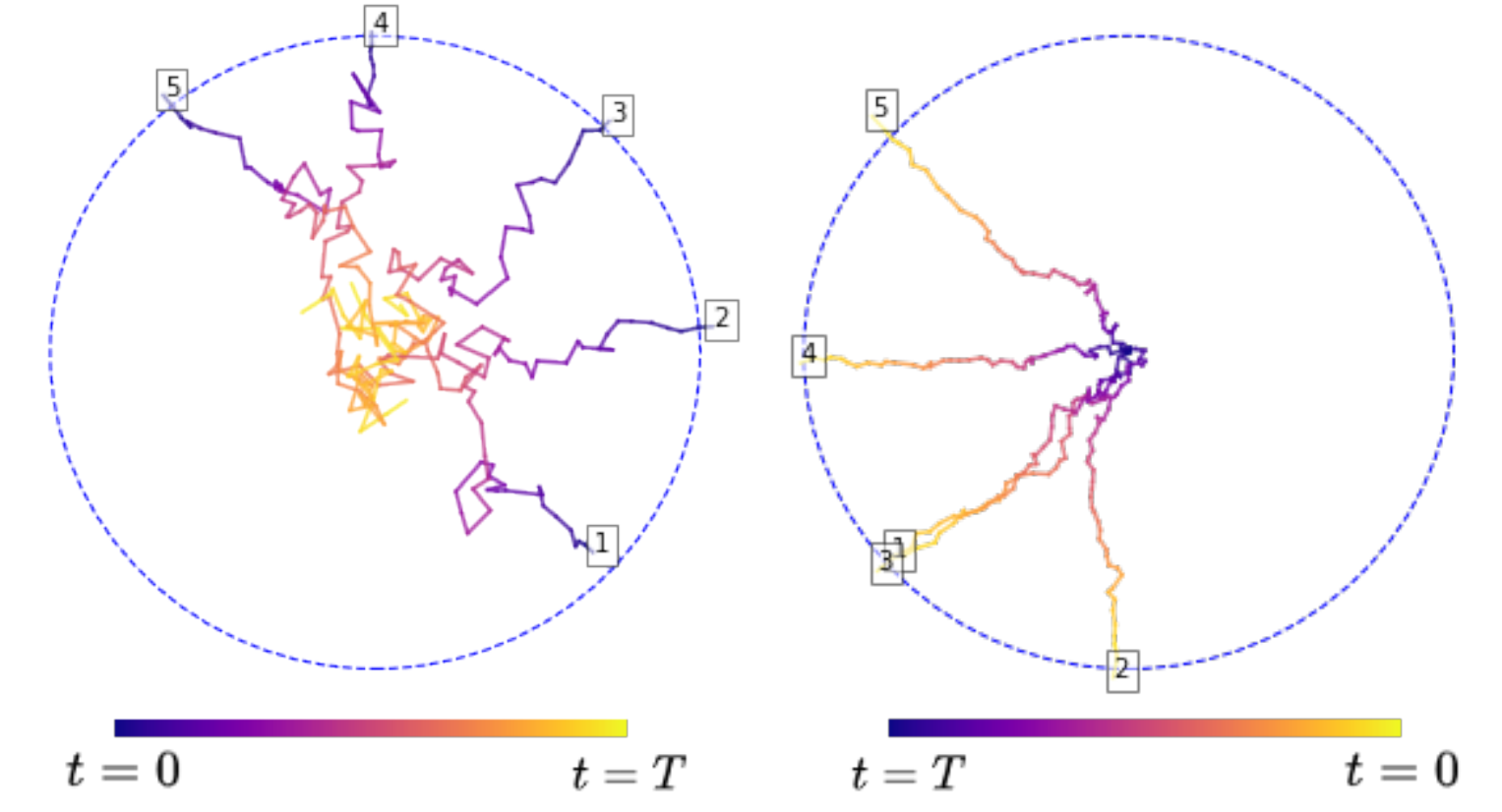}
    \caption{(Left) Forward noise process for one sample. First, one data is drawn ($\latentcont^0(x) =x$ in the toy example)  and then $\{\latentcont^t\}_{1\leq t \leq T}$ are sampled under $q_\varphi$ and displayed. 
    (Right) Reverse process for one sample $\latentcont^T\sim  \mathcal{N}(0, (2\vartheta)^{-1}\eta^2 \mathbf{I}_{dN})$. As expected, the last sample $\latentcont^0$ reaches the neighborhood of $5$ codebooks.}
    \label{fig:noisedenoise}
\end{figure}

\paragraph{End-to-end training. } Contrary to VQ-VAE procedures in which the encoder, decoder and codebooks are trained separately from the prior, we can train the bridge prior alongside the codebooks. Consider a new setup, in which the $K=8$ codebooks are randomly initialized and considered as parameters of our model (they are no longer fixed to the centers of the data generation process $\mu_j$). The first part of our loss function, in conjunction with the Gumbel-Softmax trick makes it possible to train all the parameters of the model end-to-end. Details of the procedure and results are shown in Appendix~\ref{ap:end2end}.

\subsection{Image Synthesis}
In this section, we focus on image synthesis using CIFAR10 and miniImageNet datasets. The goal is to evaluate the efficiency and properties of our model compared to the original PixelCNN. Note that for fair comparisons, the encoder, decoder and codebooks are pretrained and fixed for all models, only the prior is trained and evaluated here. As our goal is the comparison of priors, we did not focus on building the most efficient VQ-VAE, but rather a reasonable model in terms of size and efficiency.

\paragraph{CIFAR10. }
The CIFAR dataset consists of inputs $x$ of dimensions $32 \times 32$ with 3 channels. The encoder projects the input into a grid of continuous values $\latentcont^0$ of dimension $8 \times 8 \times 128$. After discretisation, $\{\latentdis^t\}_{0\leqslant t\leqslant T}$ are in a discrete latent space induced by the VQ-VAE which consists of values in $\{1,\ldots,K\}^{8 \times 8}$ with $K=256$. The pre-trained VQ-VAE reconstructions can be seen in Figure~\ref{fig:cifar_vqvae} in Appendix~\ref{ap:additional_visuals}.

\paragraph{miniImageNet. }
\textit{mini}ImageNet was introduced by \cite{Vinyals2016MatchingNF} to offer more complexity than CIFAR10, while still fitting in memory of modern machines.
600 images were sampled for 100 different classes from the original ImageNet dataset, then scaled down, to obtain 60,000 images of dimension $84 \times 84$.
In our experiments, we trained a VQVAE model to project those input images into a grid of continuous values $\latentcont^0$ of dimensions $21 \times 21 \times 32$, see Figure~\ref{fig:miniimagenet_vqvae} in Appendix~\ref{ap:additional_visuals}.
The associated codebook contains $K=128$ vectors of dimension $32$.

\paragraph{Prior models. }
Once the VQ-VAE is trained on the miniImageNet and CIFAR datasets, the $84\times 84 \times 3$ and $32\times 32 \times 3$ images respectively are passed to the encoder and result in $21 \times 21$ and $8 \times 8$ feature maps respectively. From this model, we extract the discrete latent states from training samples to train a PixelCNN prior and the continuous latent states for our diffusion.
Concerning our diffusion prior, we choose the Ornstein-Uhlenbeck process setting $\eta = \sqrt{2}$, $z_*=0$ and $\vartheta = 1$, with $T=1000$.

\paragraph{End-to-End Training.}
As an additional experiment, we propose an End-to-End training of the VQ-VAE and the diffusion process. To speed up training, we first start by pretraining the VQ-VAE, then learn the parameters of our diffusion prior alongside all the VQ-VAE parameters (encoder, decoder and codebooks). Note that in this setup, we cannot directly compare the NLL to PixelCNN or our previous diffusion model as the VQ-VAE has changed, but we can compare image generation metrics such as FID and sample quality.

\subsection{Quantitative results}
We benchmarked our model using three metrics, in order to highlight the performances of the proposed prior, the quality of produced samples as well as the associated computation costs.
Results are given as a comparison to the original PixelCNN prior for both the \textit{mini}ImageNet (see Table \ref{tab:miniimagenet}) and the CIFAR10 (see Table~\ref{tab:cifar}) datasets.

\paragraph{Negative Log Likelihood. }\label{nll_paragraph}
Unlike most related papers, we are interested in computing the Negative Log Likelihood (NLL) directly in the latent space, as to evaluate the capacity of the priors to generate coherent latent maps.
To this end, we mask a patch of the original latent space, and reconstruct the missing part, similar to image inpainting, following for instance \cite{van2016pixel}.
 In the case of our prior, for each sample $x$, we mask an area of the continuous latent state $\latentcont^0$, i.e. we mask some components of $\latentcont^0$, and aim at sampling the missing components given the observed ones using the prior model. Let $\underline{\latentdis}^0$ and $\underline{\latentcont}^0$ (resp. $\overline{\latentdis}^0$ and $\overline{\latentcont}^0$) be the masked (resp. observed) discrete and continuous latent variables. The target conditional likelihood is
\begin{align*}
p_{\theta}(\underline{\latentdis}^0|\overline{\latentcont}^0) &= \int p_{\theta}(\underline{\latentdis}^0,\underline{\latentcont}^0|\overline{\latentcont}^0)\mathrm{d} \underline{\latentcont}^0\,,\\
&= \int p_{\theta}(\underline{\latentdis}^0|\underline{\latentcont}^0)p_{\theta}(\underline{\latentcont}^0|\overline{\latentcont}^0)\mathrm{d} \underline{\latentcont}^0\,.
\end{align*}
This likelihood is intractable and replaced by a simple Monte Carlo estimate $\hat{p}_{\theta}(\underline{\latentdis}^0|\underline{\latentcont}^0)$ where $\underline{\latentcont}^0\sim p_{\theta}(\underline{\latentcont}^0|\overline{\latentcont}^0)$. Note that conditionally on $\underline{\latentcont}^0$ the components of $\underline{\latentdis}^0$ are assumed to be independent but $\underline{\latentcont}^0$ are sampled jointly under $p_{\theta}(\underline{\latentcont}^0|\overline{\latentcont}^0)$. As there are no continuous latent data in PixelCNN, $p_{\theta}(\underline{\latentdis}^0|\overline{\latentdis}^0)$ can be directly evaluated.

\paragraph{Fr\'echet Inception Distance. }
We report Fr\'echet Inception Distance (FID) scores by sampling a latent discrete state $\latentdis \in \embedspace^N$ from the prior, and computing the associated image through the VQ-VAE decoder. In order to evaluate each prior independently from the encoder and decoder networks, these samples are compared to VQ-VAE reconstructions of the dataset images.

\paragraph{Kullback-Leibler divergence. }
In this experiment,  we draw $M=1000$ samples from test set and encode them using the trained VQ-VAE, and then draw as many samples from the pixelCNN prior, and our diffusion prior. We propose then to compute the empirical Kullback Leibler (KL) divergence between original and sampled distribution at each pixel. Figure~\ref{fig:klmap} highlights that PixelCNN performs poorly on the latest pixels (at the bottom) while our method remains consistent. This is explained by our denoising process in the continuous space which uses all pixels jointly while  PixelCNN is based on an autoregressive model. 
\begin{figure}[htpb]
    \centering
    \includegraphics[width=0.49\textwidth]{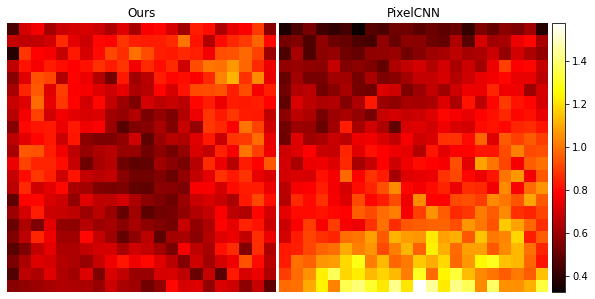}
    \caption{KL Distance between the true empirical distribution and both prior distributions in the latent space. Darker squares indicates lower (better) values.}
    \label{fig:klmap}
    \vspace{-2mm}
\end{figure}

\begin{table}[htpb]
    \centering
    \begin{tabular}{c c}
        \toprule
         & KL \\
        \toprule
        Ours & {\bf 0.713}\\
        PixelCNN & 0.809 \\
        \bottomrule
    \end{tabular}
    \caption{Averaged KL metric on the feature map.}
    \label{tab:metrics}
    \vspace{-4mm}
\end{table}

\paragraph{Computation times. }
We evaluated the computation cost of sampling a batch of 32 images, on a GTX TITAN Xp GPU card. Note that the computational bottleneck of our model consists of the $T=1000$ sequential diffusion steps (rather than the encoder/decoder which are very fast in comparison). Therefore, a diffusion speeding technique such as the one described in \cite{song2021denoising} would be straightforward to apply and would likely provide a $\times 50$ speedup as mentioned in the paper.

\begin{table}
\centering
    \caption{Results on \textit{mini}ImageNet. Metrics are computed on the validation dataset. The means are displayed along with the standard deviation in parenthesis.}
    \resizebox{.65\textwidth}{!}{\begin{tabular}{ |l||c|c|c| }
        \hline
        & NLL & FID & s/sample \\
        \hline
        PixelCNN \cite{oord2017neural}     & 1.00 ($\pm 0.05$)& 98 & 10.6s ($\pm 28ms$) \\
        Ours                               & 0.94 ($\pm 0.02$)& 99 & 1.7s ($\pm 10ms$)\\
        \hline
    \end{tabular}}
    \label{tab:miniimagenet}
\end{table}

\begin{table}
\centering
    \caption{Results on CIFAR10. Metrics are computed on the validation dataset. The means are displayed along with the standard deviation in parenthesis.}
    \resizebox{.65\textwidth}{!}{\begin{tabular}{ |l||c|c|c| }
        \hline
        & NLL & FID & s/sample \\
        \hline
        PixelCNN \cite{oord2017neural}     & 1.41 ($\pm 0.06)$ & 109 & 0.21 ($\pm 0.8ms$) \\
        Ours                               & 1.33 ($\pm 0.18$)  &  104 & 0.05s ($\pm 0.5ms$) \\
        Ours  end-to-end               & 1.59 ($\pm 0.27$)\footnote{NLL for end-to-end takes into account the full model including the modified VQ-VAE, and therefore is not directly comparable to the two others.}  &  92 & 0.11s ($\pm 0.5ms$) \\
        \hline
    \end{tabular}}
    \label{tab:cifar}
\end{table}

\subsection{Qualitative results}
\paragraph{Sampling from the prior. }
Samples from the PixelCNN prior are shown in Figure \ref{fig:pixel_samples}
and samples from our prior in Figure \ref{fig:diffusion_samples}. Additional samples are given in Appendix~\ref{ap:additional_visuals}. Note that contrary to original VQ-VAE prior, the prior is not conditioned on a class, which makes the generation less specific and more difficult. However, the produced samples illustrate that our prior can generate a wide variety of images which show a large-scale spatial coherence in comparison with samples from PixelCNN.

\begin{figure}[h!]

\begin{subfigure}{\linewidth}
    \centering
    \includegraphics[width=.7\linewidth]{./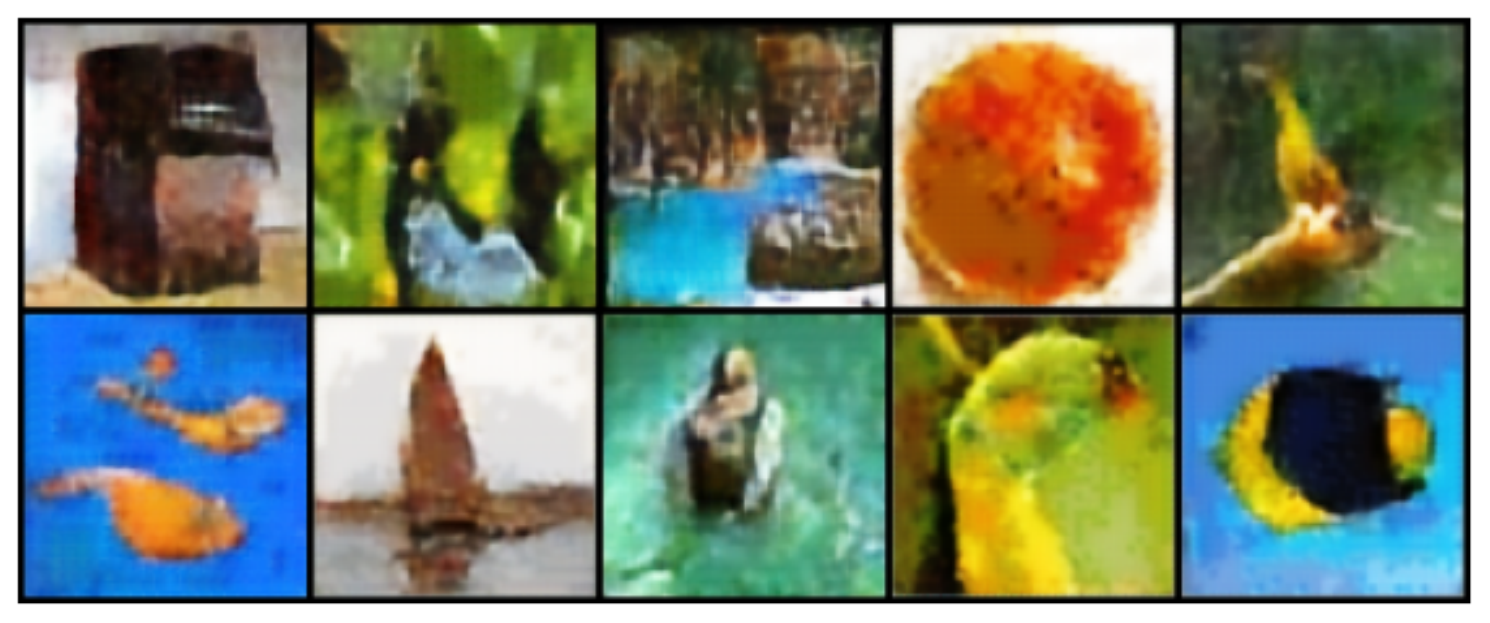}
    \caption{Samples from our diffusion prior.}
    \label{fig:diffusion_samples}
\end{subfigure}
\begin{subfigure}{\linewidth}
    \centering
    \includegraphics[width=.7\linewidth]{./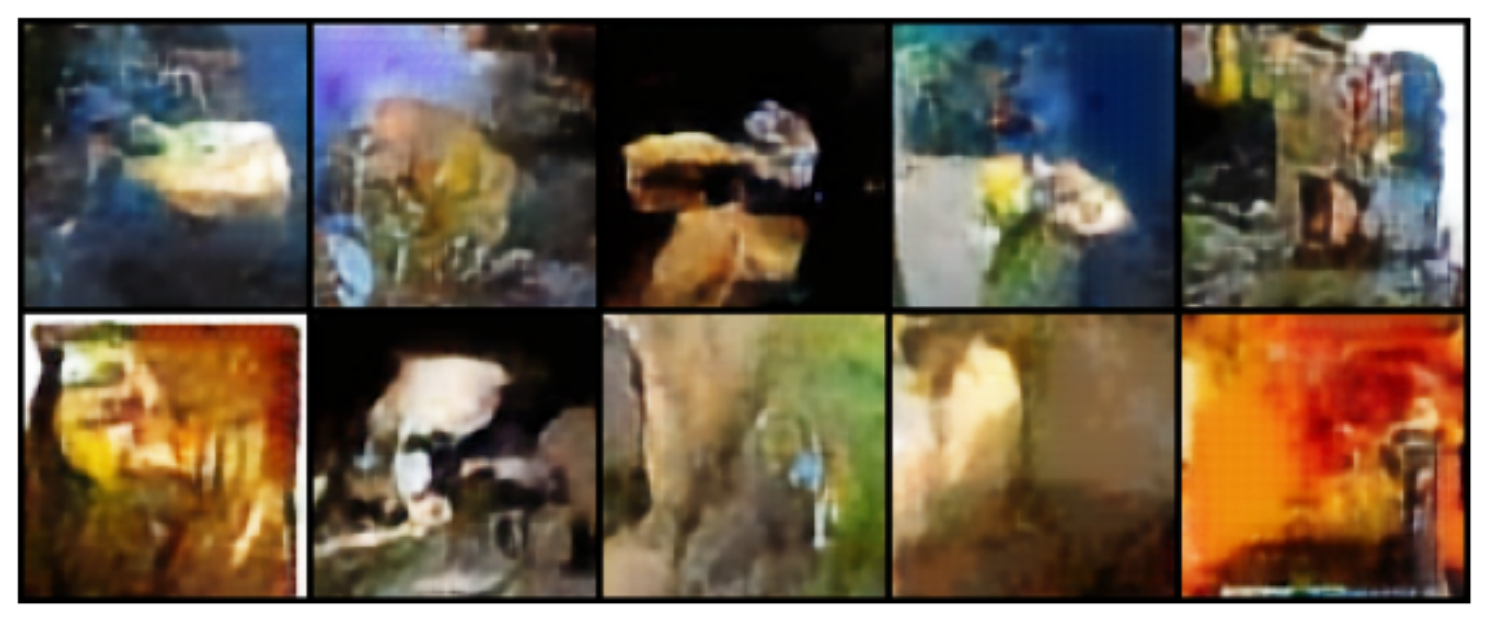}
    \caption{Samples from the PixelCNN prior.}
    \label{fig:pixel_samples}
\end{subfigure}
\caption{Comparison between samples from our diffusion-based prior (top) and PixelCNN prior (bottom).}
\label{fig:figures}
\end{figure}

\paragraph{Conditional sampling. }
As explained in Section~\ref{nll_paragraph},  for each sample $x$,  we mask some components of $\latentcont^0(x)$, and aim at sampling the missing components given the observed ones using the prior models. This conditional denoising process is further explained for our model in Appendix~\ref{ap:inpainting}. To illustrate this setting, we show different conditional samples for 3 images in Figure~\ref{fig:miniimagenet_prior_ours_conditional} and  Figure~\ref{fig:miniimagenet_prior_ours_conditional:topleft}  for both the PixelCNN prior and ours. In Figure~\ref{fig:miniimagenet_prior_ours_conditional},  the mask corresponds to a $9\times 9$ centered square over the $21\times 21$ feature map. In Figure~\ref{fig:miniimagenet_prior_ours_conditional:topleft},  the mask corresponds to a $9\times 9$ top left square. These figures illustrate that our diffusion model is much less sensitive to the selected masked region than PixelCNN. This may be explained by the use of our denoising function $\varepsilon_\theta$ which depends on all conditioning pixels while PixelCNN uses a hierarchy of masked convolutions to enforce
a specific conditioning order. Additional conditional sampling experiments are given in Appendix~\ref{ap:additional_visuals}.

\begin{figure}
    \centering
    \includegraphics[width=.7\linewidth]{./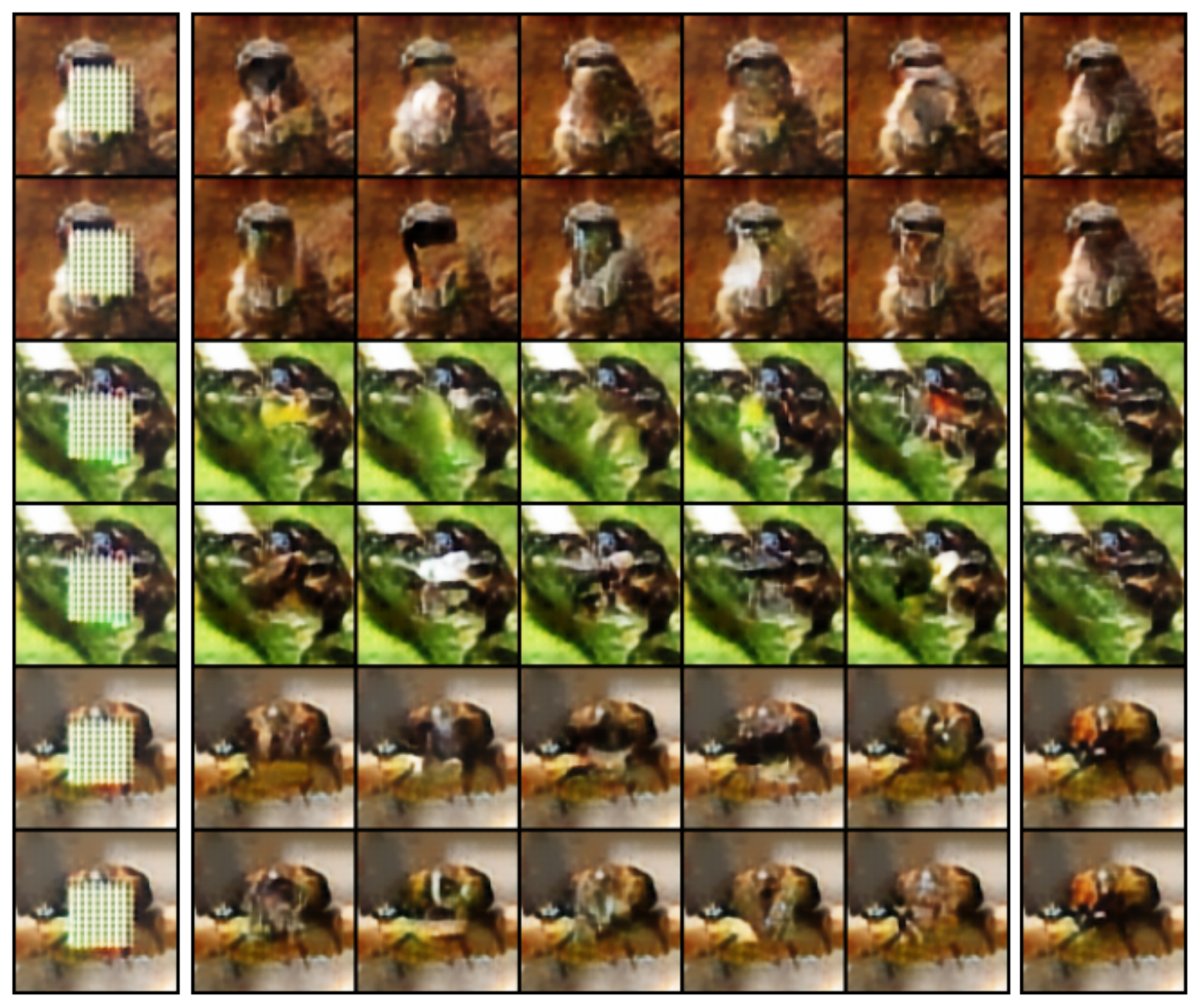}

    \caption{Conditional sampling with centered mask: for each of the 3 different images, samples from our diffusion are on top and from PixelCNN on the bottom. For each row: the image on the left is the VQVAE masked reconstruction, the image on the right is the full VQ-VAE reconstruction. Images in-between are independent conditional samples from the models.}
    \label{fig:miniimagenet_prior_ours_conditional}
\end{figure}

\begin{figure}
    \centering
    \includegraphics[width=.7\linewidth]{./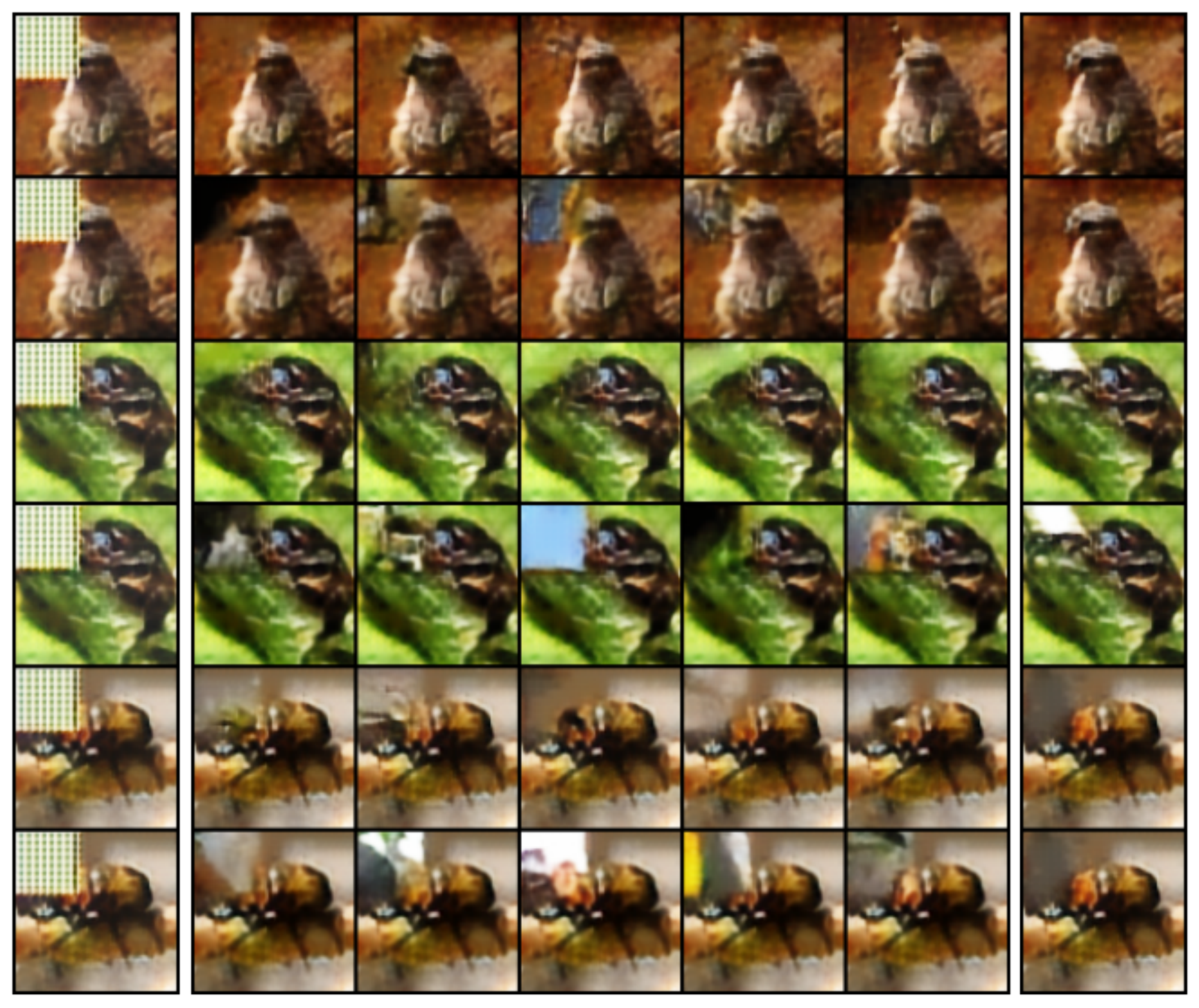}

    \caption{Conditional sampling with top left mask: for each of the 3 different images, samples from our diffusion are on top and from PixelCNN on the bottom. For each row: the image on the left is the VQVAE masked reconstruction, the image on the right is the full VQ-VAE reconstruction. Images in-between are independent conditional samples from the models.}
    \label{fig:miniimagenet_prior_ours_conditional:topleft}
\end{figure}

\paragraph{Denoising chain. }
In addition to the conditional samples, Figure~\ref{fig:miniimagenet_prior_ours_chain} shows the conditional denoising process at regularly spaced intervals, and Figure~\ref{fig:miniimagenet_prior_ours_chain_unconditional} shows unconditional denoising. Each image of the chain is generated by passing the predicted $\latentdis^t$ through the VQ-VAE decoder.

\begin{figure}
    \centering
    \includegraphics[width=.7\linewidth]{./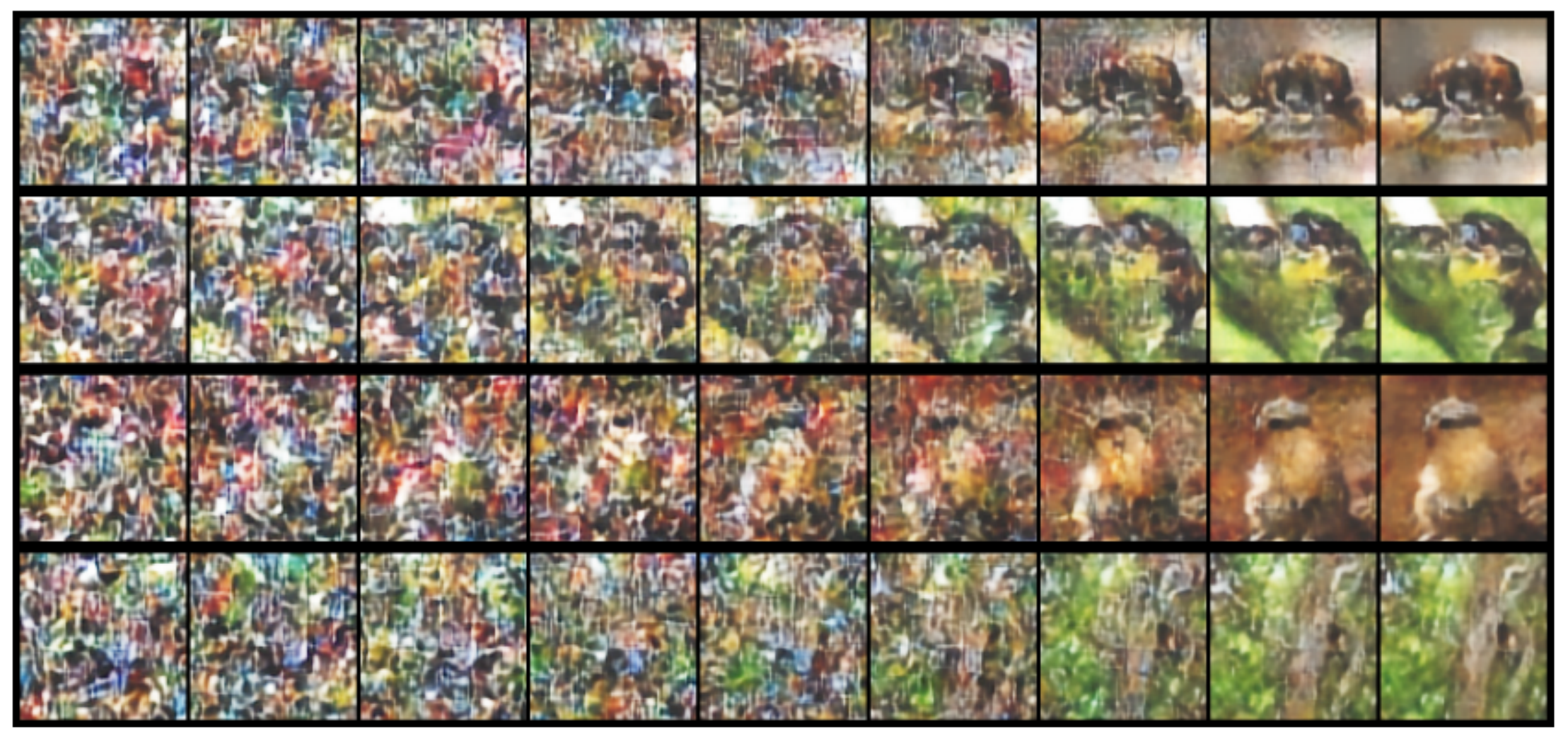}
    \caption{Sampling denoising chain from $t=500$ up to $t=0$, shown at regular intervals, conditioned on the outer part of the picture. We show only the last $500$ steps of this process, as the first $500$ steps are not visually informative. The sampling procedure is described in Appendix~\ref{ap:inpainting}.}
    \label{fig:miniimagenet_prior_ours_chain}
\end{figure}

\begin{figure}
    \centering
    \includegraphics[width=.7\linewidth]{./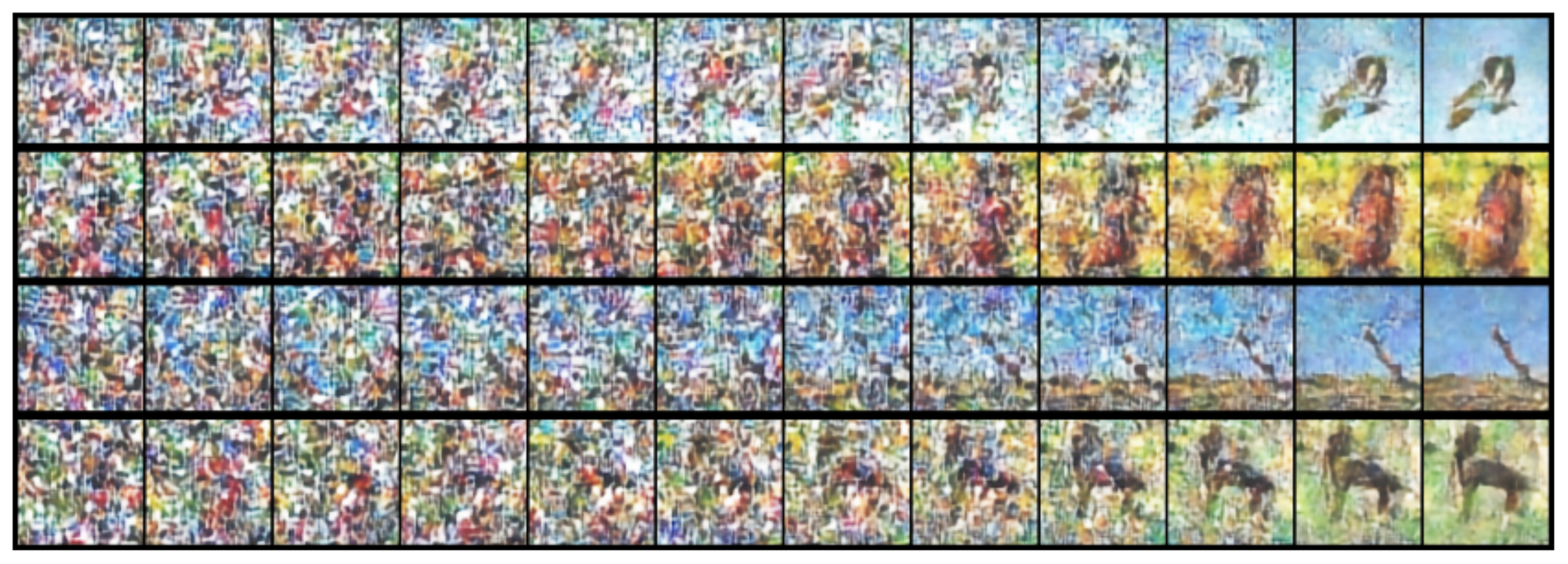}
    \caption{Sampling denoising chain from $t=500$ up to $t=0$, shown at regular intervals, unconditional. We show only the last $500$ steps of this process, as the first $500$ steps are not visually informative. The sampling procedure is described in Algorithm~\ref{alg:sample}}
    \label{fig:miniimagenet_prior_ours_chain_unconditional}
\end{figure}

\section{Conclusion}
This work introduces a new mathematical framework for VQ-VAEs which includes a diffusion probabilistic model to learn the dependencies between the continuous latent  variables alongside the encoding and decoding part of the model. We showed conceptual improvements of our model over the VQ-VAE prior, as well as first numerical results on middle scale image generation. We believe that these first numerical experiments open up many research avenues: scaling to larger models, optimal scaling of the hyperparameters, including standard tricks from other diffusion methods, studying the influence of regulazation loss for end-to-end training, etc. We hope that this framework will serve as a sound and stable foundation to derive future generative models.

\section*{Acknowledgements}
The work of Max Cohen was supported by grants from R\'egion Ile-de-France. Charles Ollion and Guillaume Quispe benefited from the support of the Chair "New Gen RetAIl" led by l’X – \'Ecole Polytechnique and the Fondation de l’\'Ecole Polytechnique, sponsored by Carrefour.


\section{Details on the loss function}
\label{ap:loss}
\begin{proof}[Proof of Lemma~\ref{lem:loss}]
By definition,
$$
\mathcal{L}(\theta,\varphi) = \mathbb{E}_{q_{\varphi}}\left[\log \frac{p_{\theta}(\latentdis^{0:T},\latentcont^{0:T},x)}{q_{\varphi}(\latentdis^{0:T},\latentcont^{0:T}| x)}\right]\,,
$$
which yields
$$
 \mathcal{L}(\theta,\varphi) = \mathbb{E}_{q_{\varphi}}\left[\log p^x_{\theta}(x|\latentdis^{0})\right]   + \mathbb{E}_{q_{\varphi}}\left[\log \frac{p^{\latentdis}_{\theta}(\latentdis^{0:T}|\latentcont^{0:T})}{q^{\latentdis}_{\varphi}(\latentdis^{0:T}|\latentcont^{0:T})}\right] +\mathbb{E}_{q_{\varphi}}\left[\log \frac{p^{\latentcont}_{\theta}(\latentcont^{0:T})}{q^{\latentcont}_{\varphi}(\latentcont^{0:T}| x)}\right]\,.
$$
The last term may be decomposed as
$$
\mathbb{E}_{q_{\varphi}}\left[\log \frac{p^{\latentcont}_{\theta}(\latentcont^{0:T})}{q^{\latentcont}_{\varphi}(\latentcont^{0:T}| x)}\right] = \mathbb{E}_{q_{\varphi}}\left[\log p^{\latentcont}_{\theta,T}(\latentcont^{T})\right] + \sum_{t=1}^T \mathbb{E}_{q_{\varphi}}\left[\log \frac{p^{\latentcont}_{\theta,t-1|t}(\latentcont^{t-1}|\latentcont^{t})}{q^{\latentcont}_{\varphi,t|t-1}(\latentcont^{t}|\latentcont^{t-1})}\right]
$$
and
$$
\mathbb{E}_{q_{\varphi}}\left[\log \frac{p^{\latentcont}_{\theta}(\latentcont^{0:T})}{q^{\latentcont}_{\varphi}(\latentcont^{0:T}| x)}\right] = \mathbb{E}_{q_{\varphi}}\left[\log p^{\latentcont}_{\theta,T}(\latentcont^{T})\right] +\mathbb{E}_{q_{\varphi}}\left[\log \frac{p^{\latentcont}_{\theta,0|1}(\latentcont^{0}|\latentcont^{1})}{q^{\latentcont}_{\varphi,1|0}(\latentcont^{1}|\latentcont^{0})}\right] + \sum_{t=2}^T \mathbb{E}_{q_{\varphi}}\left[\log \frac{p^{\latentcont}_{\theta,t-1|t}(\latentcont^{t-1}|\latentcont^{t})}{q^{\latentcont}_{\varphi,t|t-1}(\latentcont^{t}|\latentcont^{t-1})}\right]\,.
$$
By \eqref{eq:markov:bridge},
\begin{multline*}
\mathbb{E}_{q_{\varphi}}\left[\log \frac{p^{\latentcont}_{\theta}(\latentcont^{0:T})}{q^{\latentcont}_{\varphi}(\latentcont^{0:T}| x)}\right] = \mathbb{E}_{q_{\varphi}}\left[\log \frac{p^{\latentcont}_{\theta,T}(\latentcont^{T})}{q^{\latentcont}_{\varphi,T|0}(\latentcont^{T}|\latentcont^{0})}\right] + \sum_{t=2}^T \mathbb{E}_{q_{\varphi}}\left[\log \frac{p^{\latentcont}_{\theta,t-1|t}(\latentcont^{t-1}|\latentcont^{t})}{q^{\latentcont}_{\varphi,t-1|0,t}(\latentcont^{t-1}|\latentcont^{0},\latentcont^{t})}\right] \\+ \mathbb{E}_{q_{\varphi}}\left[\log p^{\latentcont}_{\theta,0|1}(\latentcont^{0}|\latentcont^{1})\right]\,,
\end{multline*}
which concludes the proof.
\end{proof}

\section{Inpainting diffusion sampling}
\label{ap:inpainting}

We consider the case in which we know a sub-part of the picture $\overline{X}$, and want to predict the complementary pixels $\underline{X}$. Knowing the corresponding $n$ latent vectors $\overline{\latentcont}^0$ which result from $\underline{X}$ through the encoder, we sample $N - n$ $\underline{\latentcont}^T$ from the uninformative distribution $\underline{\latentcont}^T \sim \mathcal{N}(0, (2\vartheta)^{-1}\eta^2 \mathbf{I}_{d \times (N - n)})$. In order to produce the chain of samples $\latentcont^{t-1}$ from $\latentcont^t$ we then follow the following procedure.
\begin{itemize}
\item $\underline{\latentcont}^{t-1}$ is predicted from $\latentcont^t$ using the neural network predictor, similar to the unconditioned case. 
\item Sample $\overline{\latentcont}^{t-1}$ using the forward bridge noising process.
\end{itemize}

\section{Additional regularisation considerations}
\label{ap:reg}

We consider here details about the parameterisation of $p_{\theta}^{\latentdis}(\latentdis^t|\latentcont^t)$ and $q_{\varphi}^{\latentdis}(\latentdis^t|\latentcont^t)$ in order to compute $\mathcal{L}^{reg}_t(\theta,\varphi)$.
Using the Gumbel-Softmax formulation provides an efficient and differentiable parameterisation.
\begin{align*}
p_{\theta,t}^{\latentdis}(\latentdis^t = \cdot|\latentcont^t) &= \mathrm{Softmax}\{(-\|\latentcont - \embed_k\|^2_2 + G_k )/\tau_t\}_{1\leqslant k \leqslant K}\,,\\
q_{\varphi,t}(\latentdis^t = \cdot|\latentcont^t) &= \mathrm{Softmax}\{(-\|\latentcont - \embed_k\|^2_2 + \tilde G_k )/\tau\}_{1\leqslant k \leqslant K}\,,
\end{align*}
 where $\{(G_k,\tilde G_k)\}_{1\leqslant k \leqslant K}$ are i.i.d. with distribution $\mathrm{Gumbel}(0,1)$, $\tau>0$, and $\{\tau_t\}_{0\leqslant t \leqslant T}$ are positive  time-dependent scaling parameters. Then, up to the additive normalizing terms,
\begin{align*}
\mathcal{L}^{reg}_t(\theta,\varphi) = \mathbb{E}_{q_{\varphi}}\left[\log \frac{p_{\theta,t}^{\latentdis}(\latentdis^{t}|\latentcont^{t})}{q_{\varphi,t}^{\latentdis}(\latentdis^{t}|\latentcont^{t})}\right] &= \left(-\frac{1}{\tau_t} + \frac{1}{\tau}\right)\|\latentcont^t - \widehat{\latentdis^t}\|_2^2   - \frac{\tilde G_k}{\tau} + \frac{G_k}{\tau_t}\,,
\end{align*}
where $\widehat{\latentdis^t}\sim q_{\varphi,t}^{\latentdis}(\latentdis^{t}|\latentcont^{t})$. Considering only the first term which depend on $\latentcont^t$ and produce non-zero gradients, we get:
$$
\mathcal{L}^{reg}_t(\theta,\varphi) = \gamma_t \|\latentcont^t - \widehat{\latentdis^t}\|_2^2
$$
where $\gamma_t = -1/\tau_t + 1/\tau$ drives the behavior of the regulariser. By choosing is $\gamma_t$ negative for large $t$, the regulariser pushes the codebooks away from $\latentcont^t$, which prevents too early specialization, or matching of codebooks with noise, as $\latentcont^{t \approx T}$ is close to the uninformative distribution. Finally, for small $t$, choosing $\gamma_t$ positive helps matching codebooks with $\latentcont$ when the corruption is small. In practice $\tau=1$ and a simple schedule from $10$ to $0.1$ for $\tau_t$ was considered in this work.

\section{Neural Networks}
\label{ap:networks}
For $\varepsilon_\theta(\latentcont^t ,t)$, we use a U-net like architecture similar to the one mentioned in \cite{ho2020denoising}. It consists of a deep convolutional neural network with 57M parameters, which is slightly below the PixelCNN architecture (95.8M parameters). The VQ-VAE encoder / decoders are also deep convolutional networks totalling 65M parameters.

\section{Toy Example Appendix}
\label{ap:additionaltoy}

\paragraph{Parameterisation}

We consider a neural network to model  $\varepsilon_\theta(\latentcont^t ,t)$. The network shown in Figure  \ref{ap:fig:toynetwork} consists of a time embedding similar to \cite{ho2020denoising}, as well as a few linear or 1D-convolutional layers, totalling around $5000$ parameters.

\begin{figure}[h!]
    \centering
    \includegraphics[scale=2.0]{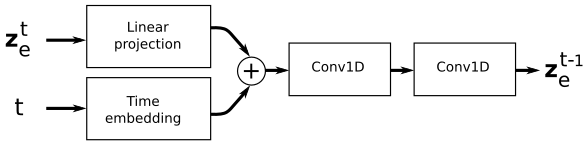}
    \caption{Graphical representation of the neural network used for the toy dataset.}
    \label{ap:fig:toynetwork}
\end{figure}

For the parameterisation of the quantization part, we choose $p_{\theta,t}^{\latentdis}(\latentdis^{t}=\embed_j|\latentcont^{t}) = \mathrm{Softmax}_{1\leq k \leq K}\{-\|\latentcont - \embed_k\|_2\}_j$, and the same parameterisation for $q_{\varphi,t}^{\latentdis}(\latentdis^{t}|\latentcont^{t})$. Therefore our loss simplifies to:
$$
\mathcal{L}(\theta,\varphi) = \mathbb{E}_{q_{\varphi}}\left[\log p^x_{\theta}(x|\latentdis^{0})\right] +  \mathcal{L}_t(\theta,\varphi)\,,
$$
where $t$ is sampled uniformly in $\{0,\ldots,T\}$.

\paragraph{Discrete samples during diffusion process}

\begin{table}[h!]
    
    \centering
    \begin{tabular}{l|c}
        t & NN sequence \\
        \hline
        50& (0, 7, 3, 6, 2)\\
        40& (6, 5, 5, 5, 3)\\
        30& (5, 5, 5, 4, 2)\\
        20& (6, 6, 5, 4, 3)\\
        10& (5, 6, 5, 4, 3)\\
        0& (5, 6, 5, 4, 3)
    \end{tabular}
    \caption{\label{ap:tab:discretetoy} Discrete samples during diffusion process. The discrete sequence is obtained by computing the nearest neighbour centroid $\mu_j$ for each $X^t_s$. At $t=0$, $X^0$ is sampled from a centered Gaussian distribution with small covariance matrix $(2\vartheta)^{-1}\eta^2\mathbf{I}_{2\times 5}$, resulting in a uniform discrete sequence, as all centroids have a similar unit norm.}

\end{table}

Discrete sequences corresponding to the denoising diffusion process shown in Figure \ref{fig:noisedenoise} are shown in Table~\ref{ap:tab:discretetoy}.

\paragraph{End-to-end training}
\label{ap:end2end}
In order to train the codebooks alongside the diffusion process, we need to backpropagate the gradient of the likelihood of the data $\latentcont$ given a $\latentcont^0$ reconstructed by the diffusion process (corresponding to $\mathcal{L}^{rec}(\theta,\varphi)$). We use the Gumbel-Softmax parameterisation in order to obtain a differentiable process and update the codebooks $\embed_j$.

In this toy example, the use of the third part of the loss $\sum_{t=0}^T \mathcal{L}^{reg}_t(\theta,\varphi)$ is not mandatory as we obtain good results with $\mathcal{L}^{reg}_t(\theta,\varphi) = 0$, which means parametrising $p_{\theta,t}^{\latentdis}(\latentdis^t|\latentcont^t) = q_{\varphi,t}^{\latentdis}(\latentdis^t|\latentcont^t)$. However we noticed that $\mathcal{L}^{reg}_t(\theta,\varphi)$ is useful to improve the learning of the codebooks. If we choose $\gamma_t$ to be decreasing with time $t$, we have the following. When $t$ is low, the denoising process is almost over, $\mathcal{L}^{reg}_t(\theta,\varphi)$ pushes $\latentcont$ and the selected $\latentdis$ to close together: $\|\latentcont\| \sim 1$, then $\|\latentcont^t\|$ will be likely near a specific $\embed_j$ and far from the others; therefore only a single codebook is selected and receives gradient. When $t$ is high, $\|\latentcont^t\| \sim 0$ and the Gumbel-Softmax makes it so that all codebooks are equidistant from $\|\latentcont^t\|$ and receive non-zero gradient. This naturally solves training problem associated with dead codebooks in VQ-VAEs. Joint training of the denoising and codebooks yield excellent codebook positionning as shown in Figure \ref{ap:codebooks}.

\begin{figure}[h!]
    \centering
    \includegraphics[scale=0.4]{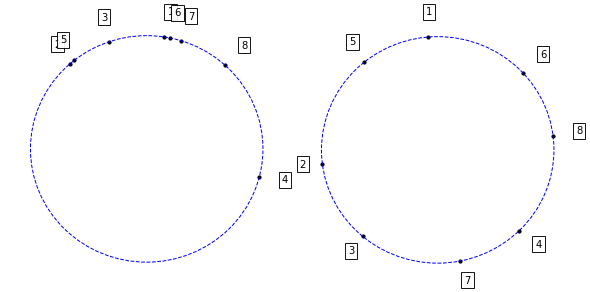}
    \caption{Left, initial random codebooks positions. Right, after training, position of codebook vectors. Note that the codebook indexes do not match the indexes of the Gaussians, the model learnt to make the associations between neighboring centroids in a different order.}
    \label{ap:codebooks}
\end{figure}

\paragraph{Toy Diffusion inpainting}

\label{ap:toyinpainting}
We consider a case in which we want to reconstruct an $x$ while we only know one (or a few) dimensions, and sample the others. Consider that $x$ is generated using a sequence $q= (q_1,q_2,q_",q_4,q_5)$ where the last one if fixed $q_1 = 0, q_5 = 4$. Then, knowing $q_1, q_5$, we sample $q_2,q_3,q_4$, as shown in Figure \ref{fig:fixeddim}.
\begin{figure}[h!]
    \centering
    \includegraphics[scale=0.5]{./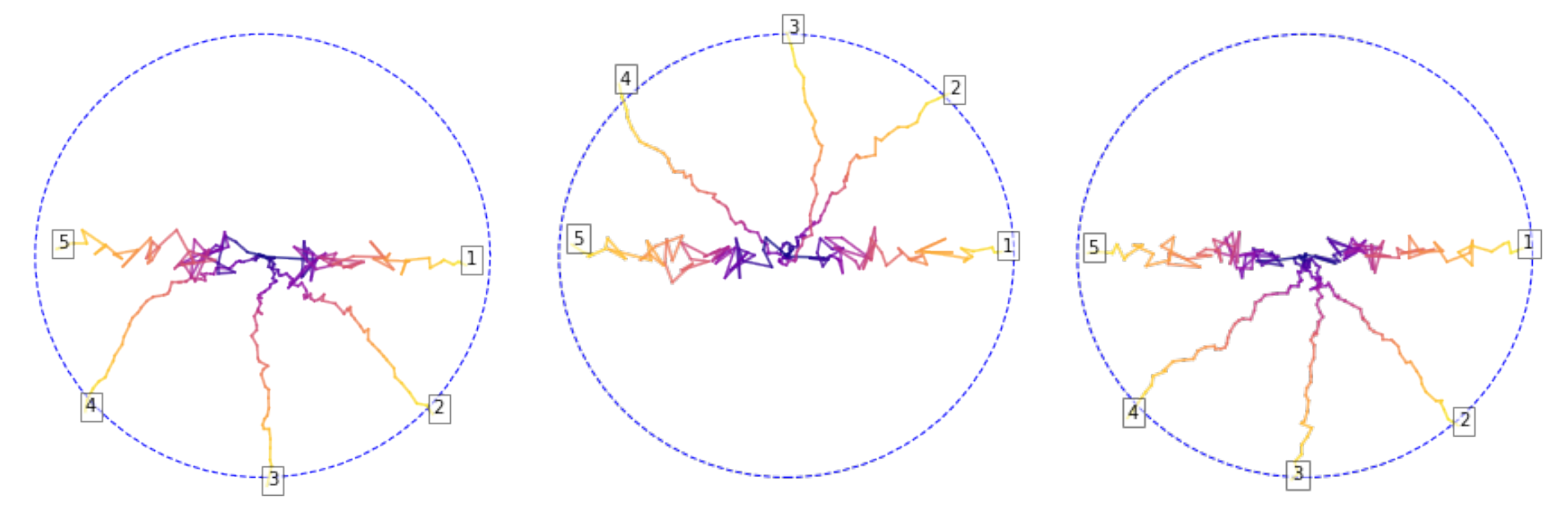}
    \caption{Three independent sampling of $X$ using a trained diffusion bridge, with fixed $q_1 = 0, q_5 = 4$. The three corresponding sequences are $(0,7,6,5,4)$, $(0,1,2,3,4)$, $(0,7,6,5,4)$ all valid sequences.}
    \label{fig:fixeddim}
\end{figure}

\clearpage
\newpage
 
\section{Additional visuals}\label{ap:additional_visuals}

\subsection{Cifar}

\begin{figure}[!htb]
    \centering
    \includegraphics[width=\linewidth]{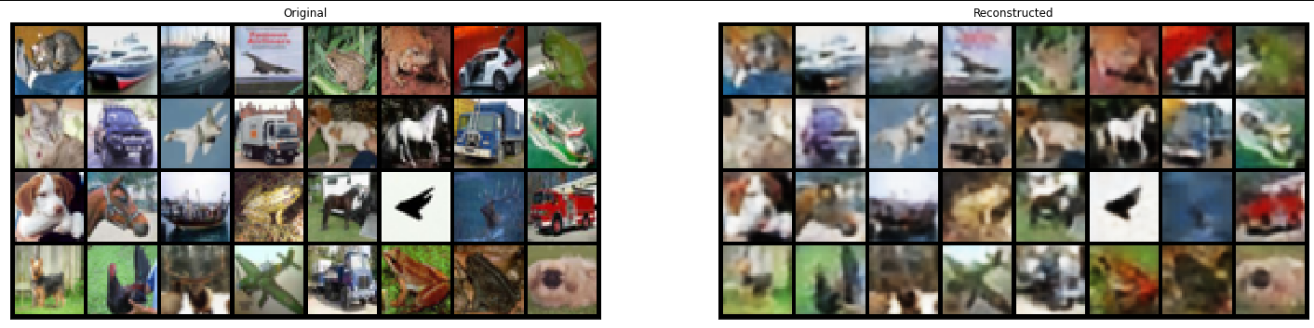}
    \caption{Reconstruction of the VQVAE model used in the following benchmarks.}
    \label{fig:cifar_vqvae}
\end{figure}

\begin{figure}[!htb]
    \centering
    \begin{minipage}[b]{0.4\textwidth}
    \includegraphics[width=1.\linewidth]{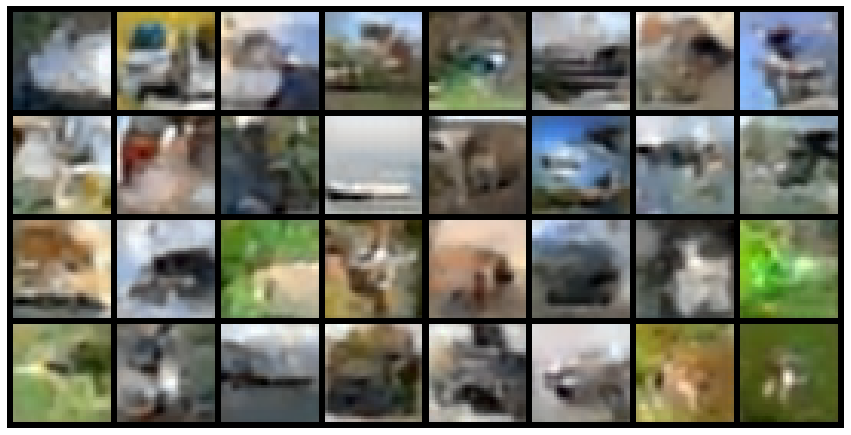}
    \end{minipage}
    \hfill
    \begin{minipage}[b]{0.4\textwidth}
    \includegraphics[width=1.\linewidth]{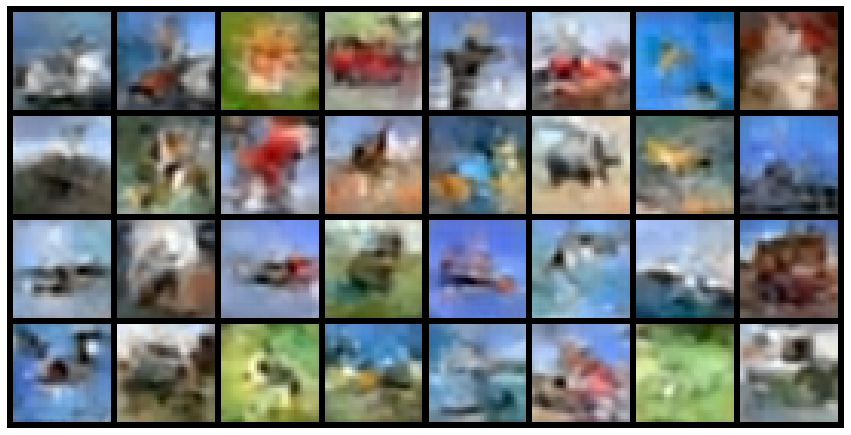}
    \end{minipage}
        \caption{Samples from the PixelCNN prior (left) and from our diffusion prior (right) on CIFAR10.}

    \label{fig:cifar_priors}
\end{figure}

\subsection{MiniImageNet}
\begin{figure}[!htb]
    \centering
    \includegraphics[width=0.9\linewidth]{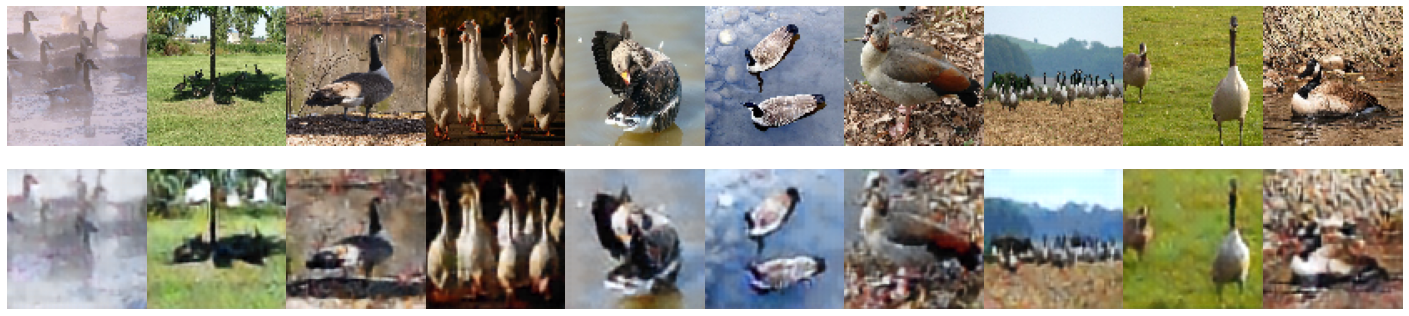}
    \caption{Reconstruction of the trained VQ-VAE on the \textit{mini}ImageNet dataset. Original images are encoded, discretised, and decoded.}
    \label{fig:miniimagenet_vqvae}
\end{figure}

\begin{figure}
    \centering
    \includegraphics[width=1.\linewidth]{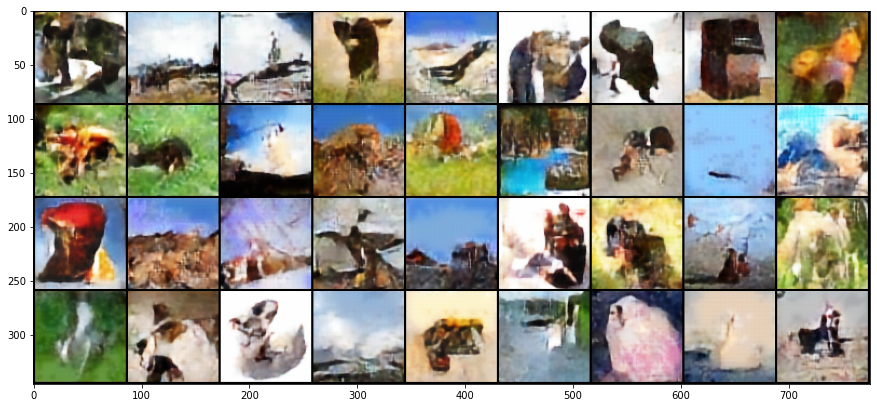}
    \caption{Samples from our model for the miniimagenet dataset}
    \label{fig:miniimagenet_prior_ours2}
\end{figure}

\begin{figure}
    \centering
    \includegraphics[width=1.\linewidth]{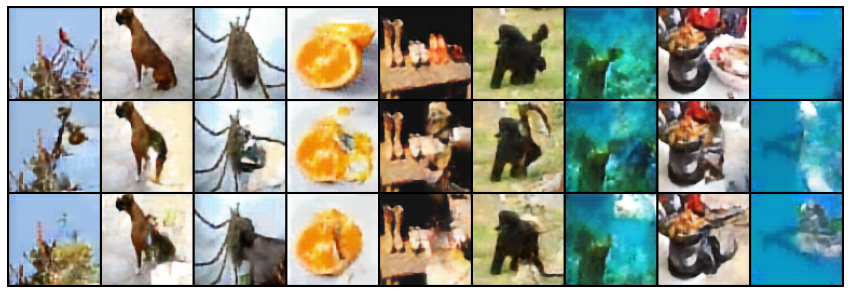}
    \includegraphics[width=1.\linewidth]{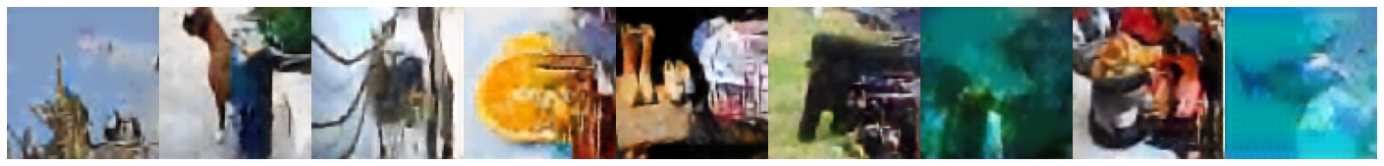}
    \includegraphics[width=1.\linewidth]{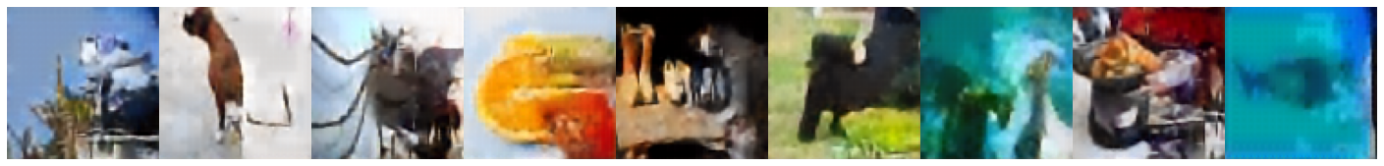}
    \caption{Conditional sampling: Top: reconstructions from the vqvae of originals images, Middle: conditional sampling with the left side of the image as condition, for our model. Bottom 1 and 2: conditional sampling in the same context with the PixelCNN prior.}
    \label{fig:miniimagenet_prior_ours_conditional2}
\end{figure}

\begin{figure}
    \centering
    \includegraphics[width=.8\linewidth]{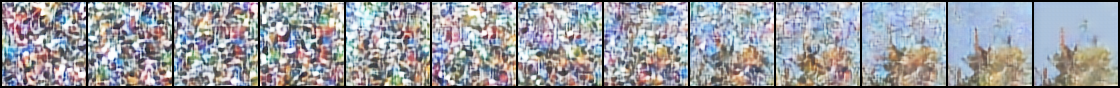}
    \includegraphics[width=.8\linewidth]{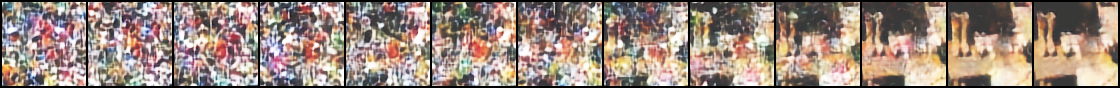}
    \includegraphics[width=.8\linewidth]{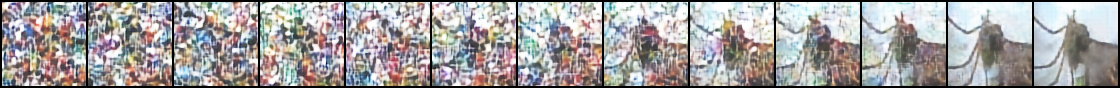}
    \includegraphics[width=.8\linewidth]{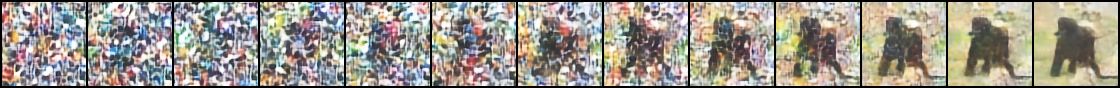}
    \includegraphics[width=.8\linewidth]{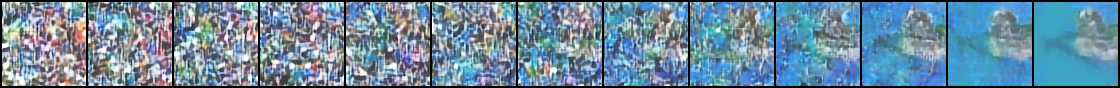}
    \includegraphics[width=.8\linewidth]{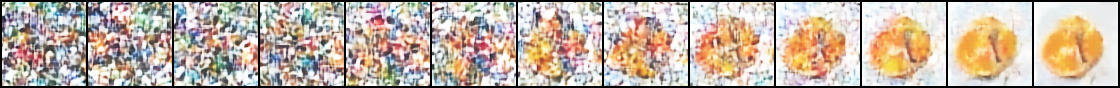}
    \caption{Sampling denoising chain from up to $t=0$, shown at regular intervals, conditioned on the left part of the picture. The sampling procedure is described in Appendix~\ref{ap:inpainting}.}
    \label{fig:miniimagenet_prior_ours_chain2}
\end{figure}

\begin{figure}
    \centering
    \begin{minipage}[b]{0.45\textwidth}
        \includegraphics[width=1.\linewidth]{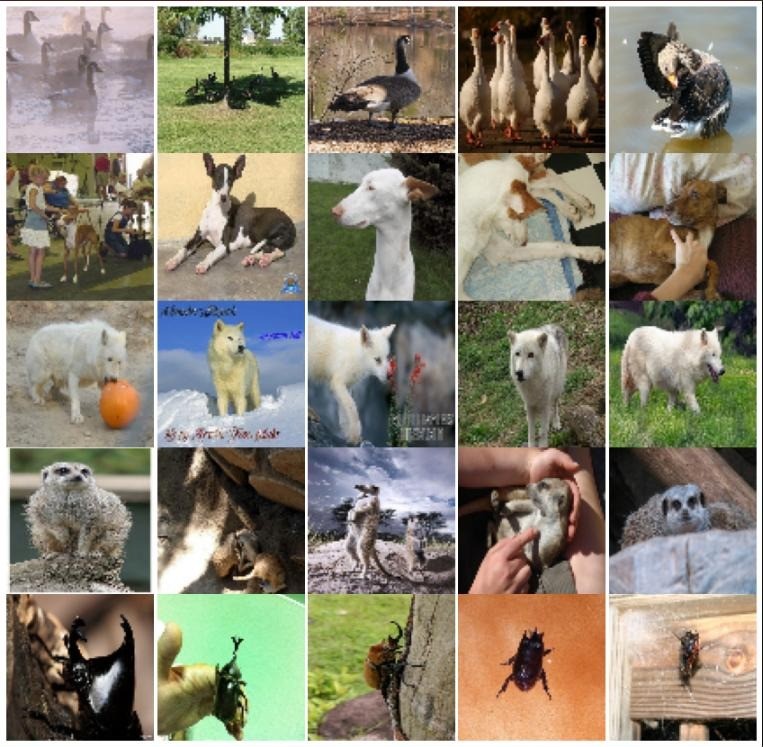}
    \end{minipage}
    \hfill
    \begin{minipage}[b]{0.45\textwidth}

    \includegraphics[width=1.\linewidth]{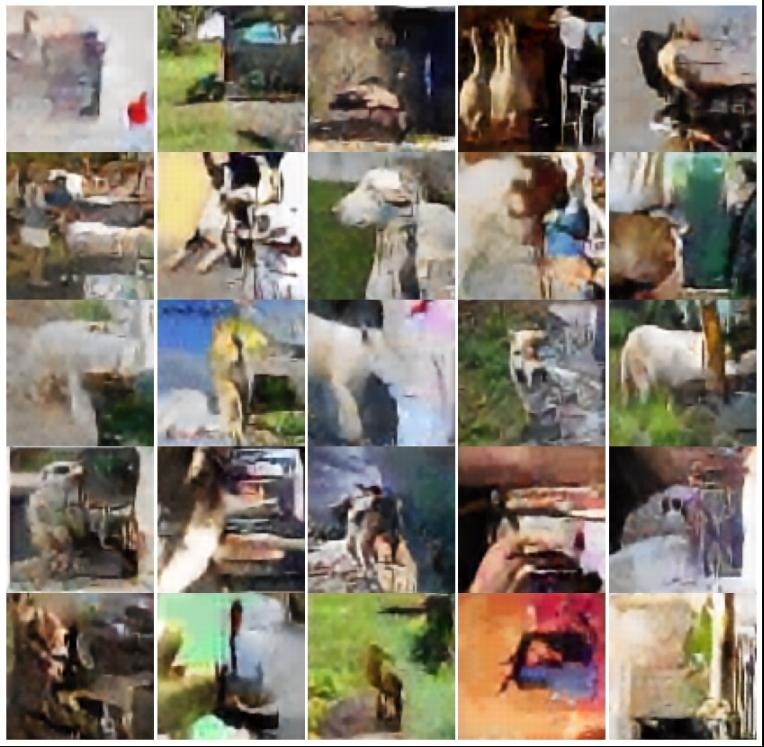}
        \end{minipage}
    \caption{Conditional sampling with the PixelCNN prior. \textbf{Left}: original images, \textbf{Right}: conditional sampling with the left side of the image as condition. Each row represents a class of the validation set of the \textit{mini}ImageNet dataset.}
    \label{fig:miniimagenet_prior_pixelcnn_conditional}

\end{figure}

\clearpage

\bibliographystyle{apalike}
\bibliography{vqvae}

\end{document}